\newcommand\numberthis{\addtocounter{equation}{1}\tag{\theequation}}
\newtheorem{definition}{Definition}
\newtheorem{lemma}{Lemma}
\newtheorem{theorem}{Theorem}
\newtheorem{proposition}{Proposition}
\newtheorem{assumption}{Assumption}
\title{Federated Learning with Communication \\ Delay in Edge Networks}
\author{\IEEEauthorblockN{Frank Po-Chen Lin,
Christopher G. Brinton, Nicol\`{o} Michelusi}
\IEEEauthorblockA{School of Electrical and Computer Engineering,
Purdue University, USA\\
Email: \{lin1183,cgb,michelus\}@purdue.edu}}
\begin{document}

\setulcolor{red}
\setul{red}{2pt}
\setstcolor{red}

\maketitle

\begin{abstract}
Federated learning has received significant attention as a potential solution for distributing machine learning (ML) model training through edge networks. This work addresses an important consideration of federated learning at the network edge: communication delays between the edge nodes and the aggregator. A technique called {\tt FedDelAvg} (federated delayed averaging) is developed, which generalizes the standard federated averaging algorithm to incorporate a weighting between the current local model and the delayed global model received at each device during the synchronization step. Through theoretical analysis, an upper bound is derived on the global model loss achieved by {\tt FedDelAvg}, which reveals a strong dependency of learning performance on the values of the weighting and learning rate. Experimental results on a popular ML task indicate significant improvements in terms of convergence speed when optimizing the weighting scheme to account for delays.
\end{abstract}

\begin{IEEEkeywords}
Federated learning, edge intelligence, distributed machine learning, convergence analysis, edge-cloud computing
\end{IEEEkeywords}

\section{Introduction}
Rapid developments of communications technology in conjunction with machine learning (ML) algorithms has resulted in an exponential rise in data generated by user devices \cite{CVN}. A paradigm shift is occurring from the conventional cloud computing architecture to a hybrid cloud-edge model where ML data processing is carried out on edge devices for many applications, particularly latency-sensitive tasks \cite{Chiang}.

Federated learning (FL) has emerged as a promising solution for distributing the training of ML models across edge devices. FL techniques, such as the popular {\tt FedAvg} algorithm \cite{McMahan}, generally consist of three steps repeated in sequence: (i) several iterations of parallel, local model training at each device using their own local datasets, (ii) aggregation of the local models at an edge server into a single, global model, and (iii) synchronization of the local models at each device with this global model. 

Several research efforts have been conducted on federated learning in recent years to address challenges such as reducing communication overhead and analyzing convergence rates. In this paper, we consider another important aspect of FL that arises in edge networks: communication delays between the edge devices and server performing the aggregations. Our proposed algorithm, {\tt FedDelAvg}, serves the dual purpose of quantifying the impact of such delays and optimizing model performance in their presence.

\subsubsection{Related work}
We divide related work on federated learning into two categories: studies on (i) reducing communication bandwidth requirements and (ii) obtaining model convergence bounds. For a recent, comprehensive survey of works on FL, see \cite{Kairouz}.

\noindent \textbf{Reducing communication requirements.}
Edge networks can have unreliable communication environments (e.g., variable wireless connections), which can impact distributed ML techniques. For this reason, \cite{McMahan,K2} proposed methods to reduce the number of upstream and downstream communication rounds required in FL. Other efforts have focused on reducing communication demand per round; in particular, \cite{Lin,K3,Horvath} proposed gradient compression methods to reduce the bandwidth required in each transmission. Recently, \cite{Tu} proposed a network-aware FL architecture which trades off communication demand with model convergence. 


\noindent \textbf{Model convergence bounds.}
Other work on FL has studied model convergence under different data distributions and local update models. \cite{Khaled} showed that the error bound of FL in the case of non-Independent and Identically Distributed (non-i.i.d.) data samples becomes worse than i.i.d., while \cite{Li} analyzed the convergence of the {\tt FedAvg} algorithm, suggesting conditions on the learning rate to achieve the optimum. \cite{Yu} discovered that training may converge faster with batch gradient descent, assuming that all edge devices participate throughout the training process. Most recently, \cite{Wang} analyzed the convergence bound of FL in the presence of a total network resource budget constraint.

The above-mentioned works do not consider the effect of network delays. In practice, delays between the edge devices and the cloud are non negligible -- usually from hundreds of milliseconds to several seconds depending on the network bandwidth \cite{Lin} -- which might severely degrade the performance of FL schemes. This aspect is the focus of our work.

\subsubsection{Outline of contributions}
We propose {\tt FedDelAvg}, a novel technique that adapts FL in the presence of network delays. Specifically, we develop a new algorithm for the synchronization step that combines local and global models to account for the effects of delay (Section II). Then, we characterize convergence of {\tt FedDelAvg} and provide suggestions for the optimal weight used in the synchronization phase (Section III). Finally, the delay-robustness of FedDelAvg on convergence speed is demonstrated numerically when the synchronization weighting is adjusted for delay (Section IV).

\section{{\tt FedDelAvg}: Federated Delayed Averaging}
In this section, we introduce the federated learning system model, the machine learning task model, and develop {\tt FedDelAvg}, our federated delayed averaging algorithm. 

\subsection{Edge Network Model}
The federated learning (FL) system architecture consists of a single edge server and multiple edge devices indexed by $i = 1,2,...,N$, as shown in Fig.\ref{fig2}. The edge devices collect data and perform local updates to optimize a loss function $F(\cdot)$ corresponding to a machine learning task (described next). The edge server (the cloud) plays the role of an aggregator, collecting the locally trained parameters $\mathbf w_i$ and the corresponding local loss functions $F_i(\mathbf w_i)$ from the edge devices to perform a global update. Local updates are taken to be gradient descent steps on the local loss functions $F_i(\mathbf w)$, while global updates refers to aggregation followed by synchronization. Aggregation denotes the computation of a global model obtained using the weighted average of local models, while synchronization represents the update of local models at the edge after aggregation \cite{Tu}.

In an edge network, the aggregation of the local model parameters at the cloud followed by the synchronization at the edge incurs communication delay, which we aim to model in our formulation.
\begin{figure}[t!]
  \centering
  \vspace{-2em}
    \includegraphics[height=57mm,width=75mm]{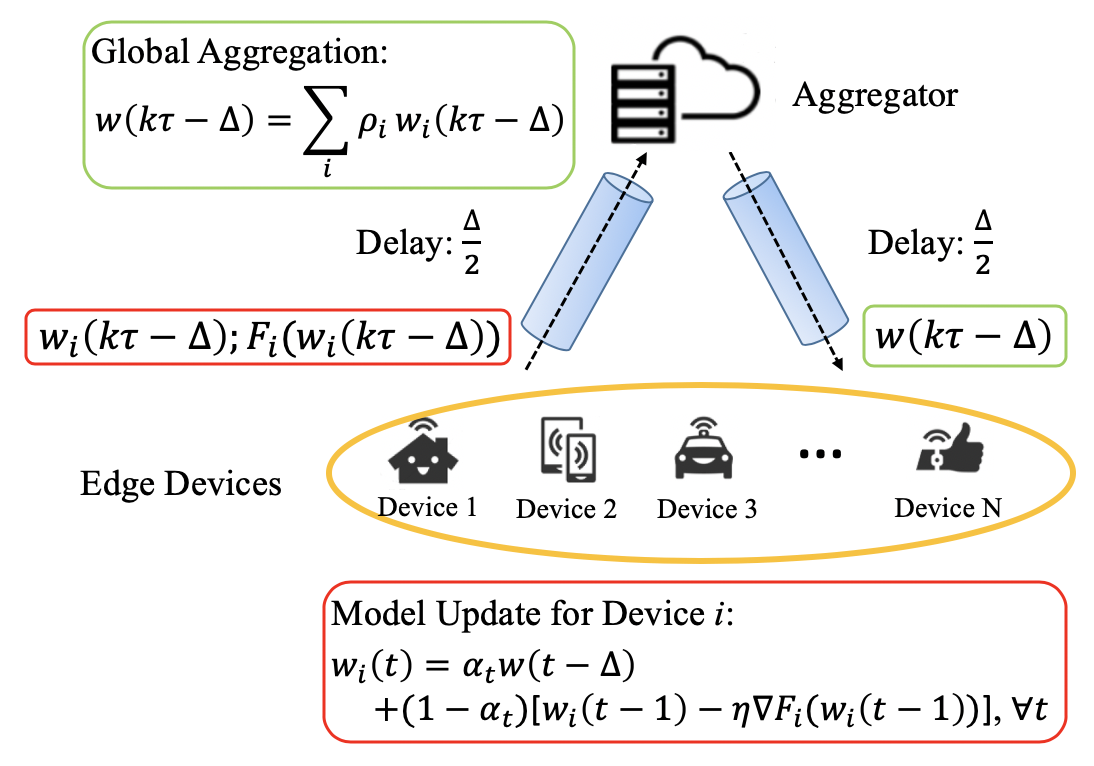}
     \caption{System architecture of {\tt FedDelAvg}. Model synchronization combines global and local models to account for communication delays.}
     \label{fig2}
     \vspace{-0.5em}
\end{figure}

\subsection{Machine Learning Model}
\subsubsection{Data structure}
Each device $i$ carries a dataset $\mathcal{D}_i$ with $D_i=|\mathcal{D}_i|$ data points. Each data point $(\mathbf x,y)\in\mathcal D_i$ consists of an $m$-dimensional feature vector $\mathbf x\in\mathbb R^m$ and a label $y\in\mathbb R$.
   
\subsubsection{Loss function}
We let $f_i(\mathbf x,y;\mathbf w)$ be the loss associated with the 
data point $(\mathbf x,y)\in\mathcal D_i$ based on a model parameter vector $\mathbf w \in \mathbb{R}^m$. For instance, in linear regression, 
the loss function is the squared error $f_i(\mathbf x,y;\mathbf w)=\frac{1}{2}(y-\mathbf w^T\mathbf x)^2$.
We define the loss function across the local dataset $\mathcal D_i$ as
\begin{align}\label{eq:1}
    F_i(\mathbf w)=\frac{1}{D_i}\sum\limits_{(\mathbf x,y)\in\mathcal D_i}
    f_i(\mathbf x,y;\mathbf w),
\end{align}
and the global loss function across all nodes can then be expressed as
\begin{align} \label{eq:2}
    F(\mathbf w)=\sum\limits_{i=1}^{N} \rho_i F_i(\mathbf w),
\end{align}
where $\rho_i = D_i / \sum_{j} D_j$ is the weight associated with the $i$th node, proportional to the size of the local dataset.

\subsubsection{Learning objective}
The goal of the machine learning task is to find the $\mathbf w^*$ that minimizes $F(\mathbf w)$, i.e.,
\begin{align}
    \mathbf w^* = \mathop{\arg\min_{\mathbf w}} F(\mathbf w).
\end{align}

To aid our analysis in Section III, we make a few standard assumptions \cite{McMahan} on the local loss functions $F_i$.
\begin{assumption}
\label{assum1}
$F_i$ is continuously
differentiable, convex, $L$-Lipschitz and $\beta$-smooth, implying that
 \begin{align}
 &| F_i(\mathbf w_1)-F_i(\mathbf w_2)| \leq L\Vert \mathbf w_1- \mathbf w_2\Vert,\\
 &\Vert \nabla F_i(\mathbf w_1)-\nabla F_i(\mathbf w_2)\Vert \leq \beta\Vert \mathbf w_1-\mathbf w_2 \Vert,
 \end{align}
 where $L \in [0, \infty)$ and $\beta \in [0, \infty)$ are the Lipschitz and smoothness constants, respectively.
\end{assumption}

By \eqref{eq:2}, Assumption \ref{assum1} holds for the global loss function $F$ too. One example of $F_i$ that satisfies Assumption \ref{assum1} is the well-known logistic regression loss.

We now state an important assumption on the dissimilarity of data at the edge devices, in addition to Assumption \ref{assum1}:
\begin{assumption} \label{def:2}
The gradients of the local and global loss functions exhibit a similarity of
\begin{align} \label{eq:11}
    \Vert \nabla F_i(\mathbf w)- \nabla F(\mathbf w) \Vert \leq \delta_i,\ \forall \mathbf w,
\end{align}
where $2L\geq\delta_i\geq 0$ is the dissimilarity parameter for node $i$.
\end{assumption} 
We let $\delta=\sum_i \rho_i \delta_i$ be the average data dissimilarity across the edge network.

\subsubsection{Centralized gradient descent}
Loss functions are typically minimized by gradient descent (GD) iterations. In a centralized case, where the global loss $F$ can be optimized directly, this is defined as
\begin{align} \label{eq:3}
    \mathbf w_{GD}(t)=\mathbf w_{GD}(t-1)-\eta\nabla F(\mathbf w_{GD}(t-1)),\
\end{align}
where $\mathbf w_{GD}(0)$ is an initialization,
 $t\geq 1$ is the iteration index, and $\eta>0$ is the learning rate. If $F$ is convex and $\eta \leq \frac{1}{\beta}$, then gradient descent converges to the globally optimal solution $\mathbf w^*$ with rate $\mathcal O(1/T)$, where $T$ is the number of iterations \cite{Bubeck}.

However, centralized gradient descent cannot be directly applied to the FL framework in Fig.\ref{fig2} since no device has direct access to all the data. 
In addition, communication to the cloud is costly in terms of network resources, so the aggregation and synchronization processes are done only periodically.
Finally, communication delay between edge and cloud is usually non-negligible, which we address next in developing the {\tt FedDelAvg} algorithm. 

\subsection{FedDelAvg Algorithm}
In {\tt FedDelAvg}, i.e., Federated Delayed Averaging, the effect of communication delay between edge and cloud on learning performance is incorporated into the design of the FL system. We divide the learning process into discrete time intervals $t\in\{1,2,...,T\}$, where the duration between two consecutive aggregations is denoted as $\tau$. The communication delay between the time when edge devices send their updates to the cloud and the resulting synchronization is denoted $\Delta$, where $\tau\geq\Delta\geq 0$. In Fig.1, we assume a symmetric delay of $\tau / 2$ upstream and downstream. 

\subsubsection{Distributed gradient descent incorporating delay}
We let $\mathbf w_i(t)$ be the local model parameter
vector of edge device $i$ at time $t$, initialized as $\mathbf w_i(-\Delta)=\mathbf w(-\Delta)$ at time $-\Delta$ across all devices.
Let
\begin{align} \label{eq:6}
        \mathbf w(t) = 
           \sum\limits_i \rho_i \mathbf w_i(t)
\end{align}
be the weighted average of the parameter vectors across the edge devices at $t$.
$\mathbf w_i(t)$ is sent at $t\in\{k\tau-\Delta,\forall k\geq 0\}$, $\mathbf w(t)$ is computed after the network delay (e.g. $\Delta/2$) at the cloud, for times $t\in\{k\tau-\Delta/2,\forall k\geq 0\}$ as a result of aggregation, and is then received at the edge devices at time $t+\Delta$.
We partition the time interval $(-\Delta,T-\Delta]$ of duration $T$ into $K=T/\tau$ periods, each of duration $\tau$ (without loss of generality, we assume $T$ is an integer multiple of $\tau$).

Consider the $k$th period, $k\in\{0,\dots, K-1\}$, spanning the time interval $\mathcal T_k = \{k\tau-\Delta+1,...,(k+1)\tau-\Delta\}$.
At each time $t \in \mathcal T_k \setminus\{k\tau\}$, each edge device performs local GD updates as
\begin{align} \label{8}
\mathbf w_i(t) = \mathbf w_i(t-1) - \eta\nabla F_i(\mathbf w_i(t-1)).
\end{align}

At time $t=k\tau$ (synchronization), each edge device 
receives the delayed global parameter vector $\mathbf w(t-\Delta)$
from the cloud. To update the local model, each device first performs a local GD update, followed by a weighted average between the local and global variables; mathematically, at time $t=k\tau$,
\begin{align} \label{9}
\nonumber
\mathbf w_i(t) =&
\alpha \mathbf w(t-\Delta)\\&+(1-\alpha)
\left[\mathbf w_i(t-1) - \eta\nabla F_i(\mathbf w_i(t-1))\right],
\end{align}
where $\alpha\in[0,1]$ is a weight parameter weighting the local vs. global updates. Note that, when $\alpha=1$ and $\Delta=0$, we obtain \cite{McMahan} as a special case.
Letting
\begin{align} 
        \alpha_t = 
        \begin{cases}
            \alpha  ,& t = k\tau,\exists k\in\{0,1,\dots,K-1\} \\
            0 ,& \text{otherwise}
        \end{cases},
    \end{align}
   we can then define the updates at all times $t \in \{-\Delta+1, ..., T-\Delta\}$ as
   \begin{align} \label{update}
   \mathbf w_i(t) =& 
            \alpha_t \sum\limits_j \rho_j \mathbf w_j(t-\Delta)\nonumber
            \\&
+(1-\alpha_t)
\left[\mathbf w_i(t-1) - \eta\nabla F_i(\mathbf w_i(t-1))\right].
\end{align}

\begin{algorithm} 
\small
\SetAlgoLined
\caption{Federated Delayed Averaging} \label{alg}
\KwIn{$\alpha_t,\tau,N,T$}
\KwOut{$\textbf{w}^{K}$}
 Initialize $\mathbf w_i(-\Delta),\  \forall i$\;
 \For{$k=0:K-1$}{
     \For{$t=k\tau-\Delta+1:(k+1)\tau-\Delta$}{
     For each edge device $i\in {1,2,...,N}$ in parallel, update local model with \eqref{update}\;
      \uIf{$t=(k+1)\tau-\Delta$}{
      Each edge device $i$ send local parameters $\mathbf w_i$ and $F_i(\mathbf w_i)$ to the cloud\;
       }
       \uElseIf{$t=(k+1)\tau-\Delta/2$}{
            Compute $\mathbf w((k+1)\tau-\Delta)$ with \eqref{eq:6} and send it to the edge for synchronization\;
            Update $\mathbf w^K$ with \eqref{eq:8}\;
          }
     }
 }
\end{algorithm}

Since edge devices send their local parameters $\mathbf w_i(t)$ and the corresponding local loss functions $F_i(\mathbf w_i(t))$ to the cloud at $t=k\tau-\Delta$, the cloud only has access to the global model $\mathbf w(t)$ and $F(\mathbf w(t))$ at times $t=k\tau-\Delta$. Then, the final model parameter chosen from {\tt FedDelAvg}, after $K$ global aggregations, is
\begin{align} \label{eq:8}
    \mathbf w^K = \mathop{\arg\min_{\mathbf w\in\mathcal W}} F(\mathbf w),
\end{align}
where $\mathcal W\equiv\{\mathbf w(k\tau-\Delta), k={0,1,\dots,K-1}\}$.

The full {\tt FedDelAvg} algorithm is summarized in Alg.\ref{alg}.

\section{Convergence Analysis of {\tt FedDelAvg}}
In this section, we study the convergence of
{\tt FedDelAvg} in terms of the optimality gap
 $F(\mathbf w^K)-F(\mathbf w^*)$ between the
global objective function at the algorithm output $\mathbf w^K$
and at the globally optimal parameter vector $\mathbf w^*$.

\begin{definition} \label{def:1}
For each period $k\in\{0,\dots,K-1\}$, as in \cite{Wang}, we define $\boldsymbol{c}_k(t)$ as the centralized gradient descent algorithm during the time interval $t\in\mathcal T_k$, i.e.,
\begin{align} \label{eq:10}
    \boldsymbol{c}_k(t) = \boldsymbol{c}_k(t-1) - \eta\nabla F(\boldsymbol{c}_k(t-1)),
\end{align}
initialized as $\boldsymbol{c}_k(k\tau-\Delta) = \mathbf w(k\tau-\Delta)$.
\end{definition}
\subsection{Optimality Gap and Optimization}
The main result is demonstrated in Theorem \ref{thm1}, which upper bounds the optimality gap under delay.

\begin{theorem} \label{thm1}
Under Assumption \ref{assum1} and with $\eta < \frac{2}{\beta}$,
\begin{align} \label{78}
    &F(\mathbf w^K)-F(\mathbf w^*)
    \nonumber \\
    &\leq \frac{1}{2\eta\phi T}+\sqrt{\frac{1}{4\eta^2\phi^2 T^2}+ \frac{L\Psi(\alpha)}{\eta\phi T}}+L\psi(\alpha,K),
\end{align}
where 
\begin{align}
&    \Psi(\alpha)\triangleq\sum\limits_{k=1}^{K}\psi(\alpha,k)
    =K\psi(\alpha,\infty)
    \\&
    -[(1+\eta\beta)^{\tau}-1]
    \frac{(1-\alpha)^{2}}{\alpha}
        \epsilon^{(K)}
    ,
    \nonumber
    \\&
\psi(\alpha,k)
    \triangleq
(1-\alpha)\epsilon^{(k)}[(1+\eta\beta)^{\tau}-1]
\\&
+(1-\alpha)h(\tau)+\alpha h(\tau-\Delta)+\alpha\eta\Delta L(1+\eta\beta)^{\tau-\Delta},
\nonumber
\\&
h(x)\triangleq\frac{\delta}{\beta}[(1+\eta\beta)^x -1]-\eta\delta x,
\\&
\epsilon^{(k)}\triangleq
[1-(1-\alpha)^{k}]2\eta L(\tau/\alpha-\Delta).
\end{align}
\end{theorem}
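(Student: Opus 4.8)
The plan is to follow the auxiliary-sequence strategy of \cite{Wang}, comparing the {\tt FedDelAvg} iterates $\mathbf w(t)$ against the per-period centralized gradient descent trajectory $\boldsymbol c_k$ of Definition~\ref{def:1}, and to track how the delay $\Delta$ and the weighting $\alpha$ perturb this comparison. First I would establish a per-period \emph{divergence bound} controlling $\Vert\mathbf w(t)-\boldsymbol c_k(t)\Vert$ for $t\in\mathcal T_k$. Both sequences obey gradient-descent recursions, so I would subtract them and iterate, using $\beta$-smoothness (Assumption~\ref{assum1}) to bound the gradient differences and the dissimilarity bound \eqref{eq:11} to absorb the gap between the local $\nabla F_i$ and the global $\nabla F$. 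The local drift accumulated over $x$ pure GD steps is captured exactly by $h(x)=\frac{\delta}{\beta}[(1+\eta\beta)^x-1]-\eta\delta x$, so $h(\tau)$ and $h(\tau-\Delta)$ will enter according to whether an inner step precedes or follows the stale synchronization instant.

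The delicate point is the synchronization step \eqref{9}, where each device mixes in the \emph{delayed} global model $\mathbf w(t-\Delta)$ rather than the current one. I would handle this by splitting the update into its $\alpha$-weighted (global, stale) and $(1-\alpha)$-weighted (local) parts and propagating the divergence across the period boundary. Because the staleness reinjects a fraction of the previous period's accumulated drift at every synchronization, this produces the geometric factor $1-(1-\alpha)^k$ and hence the quantity $\epsilon^{(k)}=[1-(1-\alpha)^k]\,2\eta L(\tau/\alpha-\Delta)$, while the extra displacement incurred during the $\Delta$ stale steps contributes the delay penalty $\alpha\eta\Delta L(1+\eta\beta)^{\tau-\Delta}$. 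Assembling these pieces and converting the parameter-space estimate to an objective-space one via the $L$-Lipschitz property, $F(\mathbf w(k\tau-\Delta))-F(\boldsymbol c_{k-1}(k\tau-\Delta))\le L\Vert\mathbf w(k\tau-\Delta)-\boldsymbol c_{k-1}(k\tau-\Delta)\Vert$, yields the per-period objective penalty $\psi(\alpha,k)$.

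Next I would invoke convergence of centralized GD: since $F$ is convex and $\beta$-smooth with $\eta<2/\beta$, each step of the auxiliary sequence satisfies the standard descent inequality, which together with the convexity bound $F(\mathbf w)-F(\mathbf w^*)\le\Vert\nabla F(\mathbf w)\Vert\,\Vert\mathbf w-\mathbf w^*\Vert$ yields a per-iteration decrease quadratic in the optimality gap, with rate constant $\phi>0$ (finite because $\eta<2/\beta$ forces $1-\tfrac{\eta\beta}{2}>0$ and the iterates remain bounded). Writing $\theta_k=F(\mathbf w(k\tau-\Delta))-F(\mathbf w^*)$, I would chain the $\tau$ centralized steps of $\boldsymbol c_{k-1}$ with the single divergence jump $L\psi(\alpha,k)$ injected at the synchronization boundary, obtaining a perturbed recursion relating $\theta_k$ and $\theta_{k-1}$.

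Finally I would solve this recursion by a reciprocal-telescoping argument in the spirit of \cite{Wang}: forming $1/\theta_k-1/\theta_{k-1}$ and summing across the $K$ periods, so that the $\tau$ centralized steps within each accumulate to the total progress factor $\eta\phi T$ (with $T=K\tau$), and collecting the penalties through $\sum_{k=1}^{K}\psi(\alpha,k)=\Psi(\alpha)$, whose closed form follows by summing the geometric series in $\epsilon^{(k)}$ (the subtracted $[(1+\eta\beta)^{\tau}-1]\frac{(1-\alpha)^2}{\alpha}\epsilon^{(K)}$ term being the residue of that summation). The resulting inequality is quadratic in the target gap, and its positive root produces the $\frac{1}{2\eta\phi T}+\sqrt{\cdots}$ expression, while the final period's penalty, which cannot be absorbed into the telescoped sum, remains as the additive $L\psi(\alpha,K)$. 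Since $\mathbf w^K=\arg\min_{\mathbf w\in\mathcal W}F(\mathbf w)$ by \eqref{eq:8}, we have $F(\mathbf w^K)-F(\mathbf w^*)\le\min_k\theta_k$, bounded by the quantity just derived, which establishes \eqref{78}. I expect the main obstacle to lie in the divergence propagation of the first two steps: ensuring the weights $\alpha$ and $1-\alpha$ attach to $h(\tau-\Delta)$ and $h(\tau)$ respectively, and that the delay penalty and the geometric factor $1-(1-\alpha)^k$ emerge with exactly the constants appearing in $\psi(\alpha,k)$.
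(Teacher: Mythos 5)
Your proposal follows essentially the same route as the paper: comparing $\mathbf w(t)$ to the per-period auxiliary GD sequence $\boldsymbol c_k(t)$, propagating the local/global divergence through the delayed synchronization to obtain $\epsilon^{(k)}$ and the per-period penalty $\psi(\alpha,k)$ (the paper's Lemmas 3--5 and Proposition 1), then telescoping the reciprocal optimality gaps with per-step progress $\eta\phi$ and solving the resulting quadratic fixed-point equation for $\xi^*$, with the residual $L\psi(\alpha,K)$ from converting the bound on the auxiliary trajectory back to $\mathbf w^K$. The only presentational difference is that the paper packages the telescoping step as a conditional bound (Proposition 2) and extracts the quadratic root via a contradiction on its hypotheses, but this is the same argument you describe.
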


We discuss the proof of Theorem 1 in Section III-B. Theorem \ref{thm1} demonstrates that the performance of {\tt FedDelAvg} under communication delay is strongly dependent on the learning rate $\eta$ and on the value of the weighting $\alpha$ used in the synchronization phase, indicating that these algorithm parameters should be carefully selected with respect to the communication delay in a given learning environment.
One way to design $\alpha$ and $\eta$ is to minimize the asymptotic optimality gap, achieved in the limit $T\to\infty$. In this case, we obtain
 \begin{align}
 \label{optgap}
        &\lim_{K\to\infty}F(\mathbf w^K)-F(\mathbf w^*)\leq \sqrt{\frac{L}{\eta\phi\tau}}\sqrt{\psi(\alpha,\infty)}
        +L\psi(\alpha,\infty),
    \end{align}
    where
    \begin{align*}
    \label{psiinf}
\numberthis
&   \psi(\alpha,\infty)=
(1-\alpha)2\eta L(\tau/\alpha-\Delta)
[(1+\eta\beta)^{\tau}-1]
\\&
+(1-\alpha)h(\tau)+\alpha h(\tau-\Delta)+\alpha\eta\Delta L(1+\eta\beta)^{\tau-\Delta}.
\end{align*}
Note that 
\eqref{optgap} is increasing in $\psi(\alpha,\infty)$. Therefore, the optimal value of $\alpha$ is the minimizer of $\psi(\alpha,\infty)$ in \eqref{psiinf}.
 When the delay is negligible ($\Delta=0$), we obtain
 \begin{align*}
\numberthis
   \psi(\alpha,\infty)&= h(\tau)+
(\alpha^{-1}-1)2\eta L\tau
[(1+\eta\beta)^{\tau}-1]
\\&
\geq \psi(1,\infty),
\end{align*}
and therefore the asymptotic optimality gap in \eqref{optgap} is minimized by choosing $\alpha=1$. In this case, we obtain the {\tt FedAvg} algorithm derived in \cite[Theorem 2]{Wang} as a special case of our analysis. Intuitively, $\alpha=1$ is the optimum for this special case because the global model obtained by the cloud at $t=k\tau$ is built based on the weighted average of up-to-date local models. 

Our analysis generalizes that in \cite[Theorem 2]{Wang} by incorporating communication delay, and by allowing $\alpha\in(0,1]$ in the synchronization phase. In this case, the term $\psi(\alpha,\infty)$ is an increasing function of $\alpha\in(0,1]$ iff
\begin{align*}
&
\alpha^2
\Big\{
\eta\Delta L[2(1+\eta\beta)^{\tau}+(1+\eta\beta)^{\tau-\Delta}-2]
\\&
- \frac{\delta}{\beta}(1+\eta\beta)^{\tau-\Delta}[(1+\eta\beta)^{\Delta}-1]
+\eta\delta\Delta
\Big\}\\&
\geq 
2\eta L\tau[(1+\eta\beta)^{\tau}-1].
\end{align*}
Note that, if 
\begin{align*}
&
\delta
\geq
\eta\beta L
\frac{\Delta(1+\eta\beta)^{\tau-\Delta}-2[(1+\eta\beta)^{\tau}-1](\tau-\Delta)}{
(1+\eta\beta)^{\tau}-(1+\eta\beta)^{\tau-\Delta}-\eta\beta\Delta
}
\end{align*}
then $\psi(\alpha,\infty)$ is a decreasing function of $\alpha$, minimized at $\alpha=1$. This result demonstrates that as data dissimilarity $\delta$ among various edge device larger than a certain threshold, the global parameters across the overall federated system dominates the learning performance such that $\alpha=1$ becomes the optimum. The large dissimilarity reduces the importance of any particular local model since it becomes harder for any of them to truly reflect the characteristics of the overall data. Only by gathering different local models across the network can the learning system form a representative whole for all data participated in the training.
Otherwise, the optimal $\alpha\in(0,1]$ is
\begin{align*}
&
\alpha
=
\sqrt{
\frac{2\eta L\tau[(1+\eta\beta)^{\tau}-1]}
{
\left[\begin{array}{c}
\eta\Delta L[2(1+\eta\beta)^{\tau}+(1+\eta\beta)^{\tau-\Delta}-2]\\
- \frac{\delta}{\beta}(1+\eta\beta)^{\tau-\Delta}[(1+\eta\beta)^{\Delta}-1]
+\eta\delta\Delta
\end{array}\right]
}}.
\end{align*}

In Theorem 1, notice that $F(\mathbf w^K)$ does not converge to the optimum as $T$ increases to infinity. This is due to the fact that the model parameters obtained by {\tt FedDelAvg} with any fixed learning rate $\eta$ will converge to a sub-optimal point. Reference \cite{Li} proved that the decay of learning rate in each training iteration is necessary for {\tt FedAvg} to converge even when assuming the loss function to be  strongly convex and smooth. We leave the algorithm design and convergence analysis for this more general case for future work.

\subsection{Proof of Theorem \ref{thm1}}
In order to prove Theorem \ref{thm1}, we introduce several properties of {\tt FedDelAvg} through supporting lemmas and propositions. The detailed proofs are
provided in Appendix A.

\setcounter{lemma}{2}
\begin{lemma} \label{epsilon}
Under Assumption \ref{assum1}, with $\eta < \frac{2}{\beta}$, we have
\begin{align} \label{eq:34}
    \Vert\mathbf w_i(k\tau-\Delta)-\mathbf w(k\tau-\Delta)\Vert \leq \epsilon^{(k)}
\end{align}
for all $k \in{1,2,...,K}$, where
\begin{align}
\epsilon^{(k)}=
[1-(1-\alpha)^{k}]2\eta L(\tau/\alpha-\Delta).
\end{align}

\end{lemma}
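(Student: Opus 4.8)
The plan is to track, for each device $i$, the deviation of its local model from the global average, $\mathbf e_i(t)\triangleq\mathbf w_i(t)-\mathbf w(t)$, and to bound $\Vert\mathbf e_i(t)\Vert$ at the upload instants $t=k\tau-\Delta$. Since every device is initialized to the same value, $\mathbf e_i(-\Delta)=\mathbf 0$ and hence $\epsilon^{(0)}=0$; the goal is then to establish a one-period recursion for $\epsilon^{(k)}$ and solve it in closed form. Within each period the deviation is driven by two mechanisms: additive growth during the pure local gradient steps, and a multiplicative contraction at the synchronization instant.

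First I would treat a pure local step. Subtracting the $\rho$-weighted average \eqref{eq:6} of the local updates \eqref{8} from the update for device $i$ gives
\begin{align*}
\mathbf e_i(t)=\mathbf e_i(t-1)-\eta\Big[\nabla F_i(\mathbf w_i(t-1))-\textstyle\sum_j\rho_j\nabla F_j(\mathbf w_j(t-1))\Big].
\end{align*}
Because $F_i$ is $L$-Lipschitz we have $\Vert\nabla F_i\Vert\le L$, so the bracketed term has norm at most $2L$ and the triangle inequality yields $\Vert\mathbf e_i(t)\Vert\le\Vert\mathbf e_i(t-1)\Vert+2\eta L$: every local step inflates the deviation by at most $2\eta L$.

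Next I would treat the synchronization instant $t=k\tau$. The key observation is that the injected global term $\alpha\mathbf w(k\tau-\Delta)$ in \eqref{9} is common to all devices, so since $\sum_j\rho_j=1$ it cancels when forming $\mathbf e_i(k\tau)$, leaving a single gradient-type step contracted by $(1-\alpha)$:
\begin{align*}
\Vert\mathbf e_i(k\tau)\Vert\le(1-\alpha)\big[\Vert\mathbf e_i(k\tau-1)\Vert+2\eta L\big].
\end{align*}
I would then combine the two mechanisms over period $k-1$. The $\Delta-1$ pure steps preceding $t=(k-1)\tau$, together with the gradient step embedded in \eqref{9} --- that is, $\Delta$ gradient-type steps in all --- lie inside the $(1-\alpha)$ contraction, while the remaining $\tau-\Delta$ local steps follow it. Propagating the per-step bounds across the period gives
\begin{align*}
\epsilon^{(k)}\le(1-\alpha)\big[\epsilon^{(k-1)}+2\eta L\Delta\big]+2\eta L(\tau-\Delta)=(1-\alpha)\epsilon^{(k-1)}+2\eta L(\tau-\alpha\Delta).
\end{align*}
This is a linear recursion with ratio $(1-\alpha)$ and constant forcing $2\eta L(\tau-\alpha\Delta)$; solving it with $\epsilon^{(0)}=0$ produces the geometric sum $\epsilon^{(k)}=\frac{2\eta L(\tau-\alpha\Delta)}{\alpha}\big[1-(1-\alpha)^{k}\big]=2\eta L(\tau/\alpha-\Delta)\big[1-(1-\alpha)^{k}\big]$, which is the claimed bound.

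I expect the main obstacle to be the bookkeeping in assembling the recursion: correctly counting how many gradient-type steps fall inside versus outside the $(1-\alpha)$ factor within a period, and in particular accounting for the gradient step embedded in the synchronization update \eqref{9}. The two conceptual ingredients --- the uniform bound $\Vert\nabla F_i\Vert\le L$ and the cancellation of the common global term at synchronization --- are straightforward; the hypothesis $\eta<\tfrac{2}{\beta}$ is inherited from Theorem \ref{thm1} for consistency, whereas the deviation bound itself relies only on the Lipschitz gradient bound.
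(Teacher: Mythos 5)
Your proposal is correct and follows essentially the same route as the paper's proof: the common term $\alpha\mathbf w(k\tau-\Delta)$ cancels when forming the deviation from the weighted average, each gradient step contributes at most $2\eta L$ via the bound $\Vert\nabla F_i\Vert\leq L$, the $\Delta$ gradient steps preceding and embedded in the synchronization carry the $(1-\alpha)$ factor while the remaining $\tau-\Delta$ do not, and the resulting linear recursion is solved as a geometric sum from the common initialization. The paper writes out the accumulated sums over a whole period at once rather than iterating per-step bounds (and retains an inconsequential $(1-\rho_i)$ factor), but this is only a presentational difference; your observation that only the Lipschitz gradient bound, not $\eta<2/\beta$, is actually needed also matches the paper's argument.
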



Lemma \ref{epsilon} bounds the error between the local model $\mathbf w_i(t)$ and the global $\mathbf w(t)$ by $\epsilon^{(k)}$ at time $t=k\tau-\Delta$ for all $k$. It can be observed that the difference between $\mathbf w_i(t)$ and $\mathbf w(t)$, depending on $\alpha$, increases as the training process continues. However, the rate of increase in this difference continues to decrease until $\epsilon^{(k)}$ converges.

This proof is consistent with our intuition that all of local models should all converge to a similar point as training continues. 
Notice that if there is no global aggregation throughout the training process ($\alpha=0$), the bound diverges. Since each device only has access to its own data, this agrees with our intuition that the local model parameters would diverge due to data dissimilarity between the devices as the training continues.
\begin{lemma} \label{localDiff}
Under Assumption \ref{assum1}, with $\eta<\frac{2}{\beta}$,
we have, for $t\in\mathcal T_k\setminus\{k\tau\}$,
\begin{align*}
&
\Vert\mathbf w_i(t)-\boldsymbol{c}_k(t)\Vert
\leq
(1+\eta\beta)\Vert\mathbf w_i(t-1)-\boldsymbol{c}_k(t-1)\Vert
+\eta\delta_i.
\end{align*}

\end{lemma}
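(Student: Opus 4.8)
The plan is to read off a one-step recursion directly from the defining updates, exploiting the fact that for $t \in \mathcal T_k \setminus\{k\tau\}$ both iterates are ordinary gradient steps: $\mathbf w_i$ descends on the local loss $F_i$ via \eqref{8}, while $\boldsymbol{c}_k$ descends on the global loss $F$ via \eqref{eq:10}. First I would subtract the two update rules to obtain
\begin{align*}
\mathbf w_i(t)-\boldsymbol{c}_k(t)
= [\mathbf w_i(t-1)-\boldsymbol{c}_k(t-1)]
-\eta[\nabla F_i(\mathbf w_i(t-1))-\nabla F(\boldsymbol{c}_k(t-1))],
\end{align*}
so that the entire problem reduces to controlling the single gradient difference $\nabla F_i(\mathbf w_i(t-1))-\nabla F(\boldsymbol{c}_k(t-1))$, which mixes two distinct effects: the drift of $\mathbf w_i$ away from $\boldsymbol{c}_k$, and the intrinsic mismatch between the local and global objectives.

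The key step is to separate these two effects by adding and subtracting $\nabla F_i(\boldsymbol{c}_k(t-1))$, writing the gradient difference as
\begin{align*}
[\nabla F_i(\mathbf w_i(t-1))-\nabla F_i(\boldsymbol{c}_k(t-1))]
+[\nabla F_i(\boldsymbol{c}_k(t-1))-\nabla F(\boldsymbol{c}_k(t-1))].
\end{align*}
The first bracket compares the same gradient $\nabla F_i$ at two points, so $\beta$-smoothness from Assumption \ref{assum1} bounds it by $\beta\Vert\mathbf w_i(t-1)-\boldsymbol{c}_k(t-1)\Vert$; the second bracket is exactly the local-versus-global gradient discrepancy evaluated at the single point $\boldsymbol{c}_k(t-1)$, so the dissimilarity inequality \eqref{eq:11} of Assumption \ref{def:2} bounds it by $\delta_i$. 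Applying the triangle inequality to the subtracted update and collecting the $\Vert\mathbf w_i(t-1)-\boldsymbol{c}_k(t-1)\Vert$ contributions then produces the $(1+\eta\beta)$ factor, with the residual $\eta\delta_i$ arising from the dissimilarity term, which is precisely the claimed bound.

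There is no substantial obstacle here: this is a single-step estimate rather than an unrolled recursion, so the only real decision is which intermediate point to insert when splitting the gradient difference. I would insert $\boldsymbol{c}_k(t-1)$ (rather than, say, $\nabla F(\mathbf w_i(t-1))$) precisely because it evaluates smoothness on $\nabla F_i$ and applies the dissimilarity bound at a \emph{common} argument, matching the two available assumptions with no slack. I would also note that the hypothesis $\eta<2/\beta$ is not actually required for this particular inequality — the triangle-inequality bound is valid for every $\eta>0$, and indeed the factor $1+\eta\beta>1$ rather than a contraction factor $\le 1$ reflects that convexity is not being used — so that hypothesis is carried along only for consistency with the convergence argument of Theorem \ref{thm1}.
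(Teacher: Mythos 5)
Your proposal is correct and follows essentially the same route as the paper: subtract the two gradient-descent updates, insert $\nabla F_i(\boldsymbol{c}_k(t-1))$ to split the gradient difference into a $\beta$-smoothness term and a dissimilarity term, and apply the triangle inequality. Your side remark that $\eta<2/\beta$ is not actually needed for this one-step estimate is also accurate.
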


Since $\boldsymbol{c}_k(t)$ is equivalent to $\mathbf w(t)$ at $t=k\tau-\Delta$ by definition, Lemma \ref{localDiff} quantifies the divergence between the local model parameter $\mathbf w_i(t)$ and the auxiliary centralized GD model parameter $\boldsymbol{c}_k(t)$ at time $t$ in terms of $\epsilon^{(k)}$ from Lemma \ref{epsilon}. It can be observed that the difference between local model $\mathbf w_i((k+1)\tau-\Delta)$ and global model $\mathbf w((k+1)\tau-\Delta)$ induces an exponential growth on the bound, dominated by the effects of $\epsilon^{(k)}$ and the dissimilarity of data distributions among different edge devices $\delta_i$ with respect to $t$. This exponentially growing term disappears when the communication delay is not considered due to the fact that $\mathbf w_i(t)=\boldsymbol{c}_k(t)$ at time $t=k\tau$ for all $k$. In this case, the bound becomes
\begin{align}
    \Vert\mathbf w_i(t)-\boldsymbol{c}_k(t)\Vert\leq& \frac{\delta_i}{\beta}\Big((\beta\eta+1)^{t-k\tau}-1\Big),
\end{align}
which is the same as derived in \cite[Lemma 3]{Wang}.

\begin{lemma} \label{lem5}
Under Assumption \ref{assum1} with $\eta<\frac{2}{\beta}$, we have
\begin{align}
&\Vert\mathbf w(k\tau)-\boldsymbol{c}_k(k\tau)\Vert
\leq
   \nonumber \\&
   \alpha\Delta L\eta+(1-\alpha)
   \Big[((1+\eta\beta)^{\Delta} -1)\epsilon^{(k)}+h(\Delta)\Big].
 \end{align}
\end{lemma}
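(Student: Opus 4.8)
The plan is to exploit the synchronization update \eqref{9} to write the aggregate model at $t=k\tau$ as a convex combination and then bound its distance to the centralized trajectory $\boldsymbol c_k$ term by term. Averaging \eqref{9} against $\rho_i$ and using $\sum_i\rho_i=1$ gives $\mathbf w(k\tau)=\alpha\,\mathbf w(k\tau-\Delta)+(1-\alpha)\,\bar{\mathbf w}(k\tau)$, where $\bar{\mathbf w}(k\tau)\triangleq\sum_i\rho_i[\mathbf w_i(k\tau-1)-\eta\nabla F_i(\mathbf w_i(k\tau-1))]$ is the aggregate of the (virtual) pure-GD local steps at $k\tau$. Writing $\boldsymbol c_k(k\tau)=\alpha\boldsymbol c_k(k\tau)+(1-\alpha)\boldsymbol c_k(k\tau)$ and applying the triangle inequality splits the target into an $\alpha$-weighted term $\Vert\mathbf w(k\tau-\Delta)-\boldsymbol c_k(k\tau)\Vert$ and a $(1-\alpha)$-weighted term $\Vert\bar{\mathbf w}(k\tau)-\boldsymbol c_k(k\tau)\Vert$.

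For the first term I would use that $\boldsymbol c_k$ is initialized at $\boldsymbol c_k(k\tau-\Delta)=\mathbf w(k\tau-\Delta)$ and then performs exactly $\Delta$ centralized GD steps before reaching $k\tau$. Unrolling \eqref{eq:10} gives $\mathbf w(k\tau-\Delta)-\boldsymbol c_k(k\tau)=\eta\sum_{s=k\tau-\Delta}^{k\tau-1}\nabla F(\boldsymbol c_k(s))$, and since the $L$-Lipschitz property in Assumption \ref{assum1} forces $\Vert\nabla F\Vert\le L$, this term is at most $\Delta L\eta$, producing the $\alpha\Delta L\eta$ contribution.

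The crux is the second term, and the key is to track the aggregate divergence $b_t\triangleq\Vert\mathbf w(t)-\boldsymbol c_k(t)\Vert$ separately from the averaged local divergence $\bar a_t\triangleq\sum_i\rho_i\Vert\mathbf w_i(t)-\boldsymbol c_k(t)\Vert$, rather than naively invoking Lemma \ref{localDiff} on the aggregate. On $t\in\{k\tau-\Delta+1,\dots,k\tau\}$ every device performs a pure GD step (the virtual one at $t=k\tau$), so subtracting \eqref{eq:10} and using the cancellation $\sum_i\rho_i\nabla F_i(\boldsymbol c_k)=\nabla F(\boldsymbol c_k)$ eliminates the dissimilarity term and yields the refined recursion $b_t\le b_{t-1}+\eta\beta\,\bar a_{t-1}$; meanwhile $\bar a_t\le(1+\eta\beta)\bar a_{t-1}+\eta\delta$ follows from averaging Lemma \ref{localDiff}. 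The initialization is what makes the bound tight: $\boldsymbol c_k(k\tau-\Delta)=\mathbf w(k\tau-\Delta)$ gives $b_{k\tau-\Delta}=0$, while Lemma \ref{epsilon} gives $\bar a_{k\tau-\Delta}\le\epsilon^{(k)}$.

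I would finish by unrolling $\bar a_t$ geometrically to $\bar a_{k\tau-\Delta+j}\le(1+\eta\beta)^j\epsilon^{(k)}+\tfrac{\delta}{\beta}[(1+\eta\beta)^j-1]$, telescoping to $\Vert\bar{\mathbf w}(k\tau)-\boldsymbol c_k(k\tau)\Vert=b_{k\tau}\le\eta\beta\sum_{j=0}^{\Delta-1}\bar a_{k\tau-\Delta+j}$, and evaluating the two resulting geometric sums. The $\epsilon^{(k)}$-sum collapses to $[(1+\eta\beta)^\Delta-1]\epsilon^{(k)}$ and the $\delta$-sum to $\tfrac{\delta}{\beta}[(1+\eta\beta)^\Delta-1]-\eta\delta\Delta=h(\Delta)$, so the second term equals exactly $[(1+\eta\beta)^\Delta-1]\epsilon^{(k)}+h(\Delta)$, and combining with the first term gives the claim. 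The main obstacle is recognizing that a direct application of Lemma \ref{localDiff} to $\mathbf w(k\tau)$ is too lossy — it would leave a spurious $\epsilon^{(k)}+\eta\delta\Delta$ slack — and that the exact constants, in particular the $-\eta\delta\Delta$ correction hidden in $h(\Delta)$, emerge only from separating the aggregate recursion (with its vanishing initial condition and gradient cancellation) from the averaged per-device recursion.
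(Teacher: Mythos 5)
Your proposal is correct and follows essentially the same route as the paper's proof: the paper likewise averages the synchronization update \eqref{9} to split $\mathbf w(k\tau)-\boldsymbol c_k(k\tau)$ into an $\alpha$-weighted sum of centralized gradients (bounded by $\alpha\eta\Delta L$ via Lemma \ref{gradNorm}) and a $(1-\alpha)$-weighted sum of gradient differences, which it controls through $\beta$-smoothness and the unrolled bound $\sum_i\rho_i\Vert\mathbf w_i(k\tau-r)-\boldsymbol c_k(k\tau-r)\Vert\le(1+\eta\beta)^{\Delta-r}\epsilon^{(k)}+\tfrac{\delta}{\beta}[(1+\eta\beta)^{\Delta-r}-1]$ — exactly your $\bar a_t$ recursion seeded by Lemma \ref{epsilon} — before evaluating the same geometric sums. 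Your presentation via the explicit $(b_t,\bar a_t)$ two-variable recursion is just a reorganization of the paper's direct unrolling, with identical ingredients and constants.
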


Lemma \ref{lem5} shows the effect of global aggregation on the gap between $\mathbf w(k\tau)$ and $\boldsymbol{c}_k(k\tau)$. The gap is dominated by two factors $\Delta L\eta$ and $((1+\eta\beta)^{\Delta} -1)\epsilon^{(k)}+h(\Delta)$, which characterize the influence from the global model and the local model, respectively. Notice that when communication delay is not considered (i.e., $\alpha=1$,$\Delta=0$), $\Vert \mathbf w(k\tau)-\boldsymbol{c}_k(k\tau)\Vert\leq 0$ at $t=
k\tau$. This can be expected since global aggregation is performed at time $t=k\tau$ for  all $k$ such that we have $\mathbf w(k\tau-\Delta)$ = $\boldsymbol{c}_k(k\tau-\Delta)$ by definition. 

\begin{proposition} \label{prop1}
Under Assumption \ref{assum1} and $\eta<2/\beta$, we have
\begin{align*}
\numberthis
&\Vert\mathbf w((k+1)\tau-\Delta)-\boldsymbol{c}_k((k+1)\tau-\Delta)\Vert
\\&
   \leq
   \psi(\alpha,k)\triangleq
(1-\alpha)\epsilon^{(k)}\{[1+\eta\beta]^{\tau}-1\}
\\&
+(1-\alpha)h(\tau)+\alpha h(\tau-\Delta)
+\alpha\eta\Delta L[1+\eta\beta]^{\tau-\Delta},
\end{align*}
\end{proposition}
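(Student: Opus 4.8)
The plan is to start from the synchronization-instant bound of Lemma~\ref{lem5} and carry it forward over the $\tau-\Delta$ remaining iterations of the $k$-th period, during which every device performs only local gradient steps. The key observation is that the bound of Lemma~\ref{lem5} is already a convex combination $\alpha\,(\Delta L\eta)+(1-\alpha)\,[((1+\eta\beta)^{\Delta}-1)\epsilon^{(k)}+h(\Delta)]$ of a ``global'' piece and a ``local'' piece, and that the target $\psi(\alpha,k)$ is the same convex combination with each piece advanced by $\tau-\Delta$ further steps. Thus the whole argument is a single forward propagation of Lemma~\ref{lem5} across the rest of the period, followed by clean algebra.

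First I would note that for $t\in\{k\tau+1,\dots,(k+1)\tau-\Delta\}$ the update \eqref{update} reduces to a pure local gradient step ($\alpha_t=0$), so the global divergence $\|\mathbf{w}(t)-\boldsymbol{c}_k(t)\|$ obeys the same one-step recursion underlying Lemma~\ref{localDiff}: by the triangle inequality and $\beta$-smoothness it is amplified by the factor $(1+\eta\beta)$ and incremented by the accumulated data dissimilarity. Averaging the per-device gradient differences against $\rho_i$ and using $\sum_i\rho_i\nabla F_i=\nabla F$ from \eqref{eq:2} together with $\sum_i\rho_i\delta_i=\delta$ converts the per-node dissimilarities into the network average $\delta$ and, after summing the geometric series over $n=\tau-\Delta$ steps, yields the tightened increment $h(\tau-\Delta)$. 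Concretely this produces $\|\mathbf{w}((k+1)\tau-\Delta)-\boldsymbol{c}_k((k+1)\tau-\Delta)\|\le(1+\eta\beta)^{\tau-\Delta}\|\mathbf{w}(k\tau)-\boldsymbol{c}_k(k\tau)\|+h(\tau-\Delta)$.

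Next I would substitute the Lemma~\ref{lem5} bound for $\|\mathbf{w}(k\tau)-\boldsymbol{c}_k(k\tau)\|$ and distribute the newly added term as $h(\tau-\Delta)=\alpha\,h(\tau-\Delta)+(1-\alpha)\,h(\tau-\Delta)$ across the two weights. The $\alpha$-piece becomes $\eta\Delta L(1+\eta\beta)^{\tau-\Delta}+h(\tau-\Delta)$, which is already the global contribution of $\psi(\alpha,k)$. For the $(1-\alpha)$-piece I would simplify $(1+\eta\beta)^{\tau-\Delta}h(\Delta)+h(\tau-\Delta)$ up to $h(\tau)$, and $(1+\eta\beta)^{\tau-\Delta}[(1+\eta\beta)^{\Delta}-1]=(1+\eta\beta)^{\tau}-(1+\eta\beta)^{\tau-\Delta}$ up to $(1+\eta\beta)^{\tau}-1$; both are valid upper bounds because $(1+\eta\beta)^{\tau-\Delta}\ge1$ (recall $\eta<2/\beta$ and $\tau\ge\Delta$), and they trade the exact propagated expression for the compact form in the statement. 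Collecting the two pieces assembles exactly $\psi(\alpha,k)$, where $\epsilon^{(k)}$ is supplied by Lemma~\ref{epsilon} and $\boldsymbol{c}_k$ is the centralized iterate of Definition~\ref{def:1} initialized at $\mathbf{w}(k\tau-\Delta)$.

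The main obstacle is the per-step propagation of the second paragraph: establishing that the \emph{averaged} divergence obeys the same $(1+\eta\beta)$-amplified recursion as the per-device quantity of Lemma~\ref{localDiff}, with $\delta_i$ replaced by the network average $\delta$ and, crucially, with the tightened dissimilarity term $h$ (carrying its $-\eta\delta x$ correction) rather than the naive geometric sum $\tfrac{\delta}{\beta}[(1+\eta\beta)^{\tau-\Delta}-1]$. This hinges on performing the $\rho_i$-weighted averaging \emph{inside} the gradient-difference estimate, so that the exact-gradient cancellation $\sum_i\rho_i\nabla F_i=\nabla F$ is available at each step; applying the triangle inequality per device and averaging afterward would lose this cancellation and give a strictly weaker bound. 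This is precisely the technique already used to produce $h(\Delta)$ in Lemma~\ref{lem5}, so once it is reused for the horizon $\tau-\Delta$, the remaining convex-combination bookkeeping and the two upper-bounding simplifications are routine.
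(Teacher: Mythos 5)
There is a genuine gap in your key propagation step. You claim that for $t\in\{k\tau+1,\dots,(k+1)\tau-\Delta\}$ the global divergence satisfies $\Vert\mathbf w(t)-\boldsymbol{c}_k(t)\Vert\le(1+\eta\beta)\Vert\mathbf w(t-1)-\boldsymbol{c}_k(t-1)\Vert+(\text{dissimilarity increment})$, so that Lemma~\ref{lem5} can be pushed forward by a factor $(1+\eta\beta)^{\tau-\Delta}$ plus $h(\tau-\Delta)$. This recursion is not derivable. Writing out one step,
\begin{align*}
\mathbf w(t)-\boldsymbol{c}_k(t)=\big[\mathbf w(t-1)-\boldsymbol{c}_k(t-1)\big]-\eta\sum_i\rho_i\big[\nabla F_i(\mathbf w_i(t-1))-\nabla F_i(\boldsymbol{c}_k(t-1))\big],
\end{align*}
the dissimilarity cancels \emph{exactly} (since $\sum_i\rho_i\nabla F_i(\boldsymbol{c}_k(t-1))=\nabla F(\boldsymbol{c}_k(t-1))$, all evaluated at the common point $\boldsymbol{c}_k(t-1)$), and what remains after smoothness is the increment $\eta\beta\sum_i\rho_i\Vert\mathbf w_i(t-1)-\boldsymbol{c}_k(t-1)\Vert$. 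This quantity is \emph{not} controlled by $\Vert\mathbf w(t-1)-\boldsymbol{c}_k(t-1)\Vert$: the per-device deviations can be large while their $\rho_i$-weighted average is small, and the gap between the two is precisely the local-model spread, which is seeded by $\epsilon^{(k)}$ at $t=k\tau-\Delta$ and grows geometrically through the period. Your appeal to the ``cancellation $\sum_i\rho_i\nabla F_i=\nabla F$'' does not help here because the gradients $\nabla F_i(\mathbf w_i(t-1))$ are evaluated at \emph{different} points across devices. Consequently the global divergence cannot be propagated as a self-contained scalar recursion; the per-device divergences must be tracked alongside it.

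This is exactly what the paper's proof does: the global divergence obeys $\Vert\mathbf w(t)-\boldsymbol{c}_k(t)\Vert\le\Vert\mathbf w(t-1)-\boldsymbol{c}_k(t-1)\Vert+\eta\beta\sum_i\rho_i\Vert\mathbf w_i(t-1)-\boldsymbol{c}_k(t-1)\Vert$ (note: no $(1+\eta\beta)$ amplification of the global term, and no $\delta$), and the accumulated sums $\sum_\ell\sum_i\rho_i\Vert\mathbf w_i(\ell)-\boldsymbol{c}_k(\ell)\Vert$ over the whole interval $[k\tau-\Delta,(k+1)\tau-\Delta)$ are then bounded via the per-device recursion of Lemma~\ref{localDiff} (with $\epsilon^{(k)}$ from Lemma~\ref{epsilon} as initial condition and the synchronization at $t=k\tau$ handled separately); Lemma~\ref{lem5} is an intermediate consequence, not the sole input. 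A symptom of the omission in your route: the $\epsilon^{(k)}$ coefficient you would obtain is $(1-\alpha)\big[(1+\eta\beta)^{\tau}-(1+\eta\beta)^{\tau-\Delta}\big]$, strictly smaller than the stated $(1-\alpha)\big[(1+\eta\beta)^{\tau}-1\big]$, because the contribution of the per-device spread during the steps $t>k\tau$ has been dropped. Your final convex-combination bookkeeping and the inequalities $(1+\eta\beta)^{\tau-\Delta}h(\Delta)+h(\tau-\Delta)\le h(\tau)$ are fine as algebra, but they rest on a propagation inequality that the stated argument does not establish.
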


\noindent such that
\begin{align*}
    \Vert F(\mathbf w((k+1)\tau&-\Delta))-F(\boldsymbol{c}_k((k+1)\tau-\Delta))\Vert \leq L\psi(\alpha,k).
\end{align*}

Proposition \ref{prop1} quantifies the upper bound on the divergence between the global model parameter $\mathbf w(t)$ and the auxiliary parameter $\boldsymbol{c}_k(t)$ at $t=(k+1)\tau-\Delta$. 

When the communication delay is negligible ($\Delta=0$) and $\alpha=1$, we see that $\psi(\alpha,k)$ converges to 
\begin{align*}
    \Vert \mathbf w((k+1)\tau)-\boldsymbol{c}_k((k+1)\tau)\Vert\leq \frac{\delta}{\beta}\Big((\beta\eta+1)^{\tau}-1\Big)-\eta\delta\tau,
\end{align*}
which is consistent with the result derived in \cite[Theorem 1]{Wang}, showing that {\tt FedDelAvg} and {\tt FedAvg} are equivalent when $\alpha=1$ with no delay.

When the communication delay is non-negligible $(\Delta>0)$ and $\alpha\leq1$, $\psi(\alpha,k)$ shifts from (i) $\epsilon^{(k)}\{[1+\eta\beta]^{\tau}-1\}+h(\tau)$ to (ii) $\eta\Delta L[1+\eta\beta]^{\tau-\Delta}+h(\tau-\Delta)$ as $\alpha$ goes from $0$ to $1$. Since factors (i) and (ii) are dominated by $\alpha$ and $1-\alpha$ respectively, they correspond to the degree of influence the global and local models have on the bound respectively. We analyze the characteristics of the bound under two cases: $\alpha=1$ and $\alpha<1$. When $\alpha=1$, the bound remains fixed at $t=(k+1)\tau-\Delta$ for all $k$ since the local models are synchronized to the same value after global aggregation. When $\alpha<1$, the bound does not remain the same for all time $t=(k+1)\tau-\Delta$ as $k$ increases since the local models are synchronized to different values after global aggregation. From Proposition \ref{prop1}, it can be observed that the alteration of the bound is characterized by $\epsilon^{(k)}$. Therefore, similar to Lemma \ref{epsilon}, the bound increases in the beginning while the rate of increase continues to decrease as the training process continues until it converges. 

In [15], these results are combined together to prove Theorem \ref{thm1} as follows. We first bound the error between the local and global model by $\epsilon^{(k)}$ at every initialization point of $\boldsymbol{c}_k(t)$ using Lemma \ref{epsilon}. Since $\boldsymbol{c}_k(t)=\mathbf w(t)$ at the initialization points $t=k\tau-\Delta$, we can then bound the divergence between the local model $\mathbf w_i(t)$ and $\boldsymbol{c}_k(t)$ for $t\in\{k\tau-\Delta,...,k\tau-1\}$ by combining $\epsilon^{(k)}$ with the recursive relationship in Lemma \ref{localDiff}. Considering the synchronization phase of {\tt FedDelAvg}, we further bound the divergence between the global model $\mathbf w(t)$ and $\boldsymbol{c}_k(t)$ at $t=k\tau$ using results in Lemma \ref{localDiff}. Finally, applying Lemma \ref{lem5}, we quantify the upper bound on the divergence between the global model $\mathbf w(t)$ and $\boldsymbol{c}_k(t)$ at $t=(k+1)\tau-\Delta$ in Proposition \ref{prop1}. Combining the upper bound obtained in Proposition \ref{prop1} with the convergence property of $\boldsymbol{c}_k(t)$ then completes the proof of Theorem \ref{thm1}.


\section{Experimental Evaluation}
To verify our theoretical results, we conduct numerical experiments to examine the effect of communication delay on the convergence of federated learning. The simulation is carried out using the TensorFlow Federated (TFF) framework \cite{tf}. Considering the fact that cloud would only have access to the global model at $t=k\tau-\Delta$ for all $k$ when the delay between edge and cloud are carefully considered, we evaluate the averaged model $\Delta$ iterations before each global synchronization on the corresponding global loss function. 

We consider a federated learning system with $N=10$ edge devices. The number of local update steps between two global aggregations is set to $\tau=10$, communication delay is set to $\Delta=9$, and the total number of global aggregation steps is set to $K=100$.\\
\textbf{Dataset.} The MNIST dataset \cite{LeCun} containing 70K images (60K for training and 10K for testing) of hand-written digits is considered in the simulation. We distribute the dataset among the edge devices in a manner such that each obtains a subset corresponding to a specific writer. Since each writer has an unique writing style, the data exhibits dissimilarity ($\delta>0$) among devices (data dissimilarity is referred to as 
“non-iid" in the literature, see \cite{Kairouz}).\\
\textbf{ML model.} We consider a generic multinominal logistic regression machine learning model to predict the label of each image out of $s=10$ possible classes. The cross-entropy loss $f_i(\mathbf x,y;\mathbf w)= -\sum\limits_{j=1}^{s}\{y=j\}\log e^{\mathbf w_j^T \mathbf x}\Big(\sum\limits_{l=1}^{s}e^{\mathbf w_l^T \mathbf x}\Big)^{-1}$, which satisfies Assumptions \ref{assum1} and \ref{def:2}, is applied as the loss function at each edge device. During the local updating process, each edge device performs gradient descent with full batch size and a fixed learning rate $\eta=0.02$.

We study the effects of the two key parameters -- the weighting $\alpha$ which we control, and the communication delay $\Delta$ which is an artifact of the system --  on Federated Delayed Average Learning. The convergence of the testing accuracy, defined as the fraction of classes predicted correctly relative to the total number of predictions, is demonstrated with respect to different values of $\alpha$ and $\Delta$.

\begin{figure}
\centering
 \includegraphics[scale=0.23]{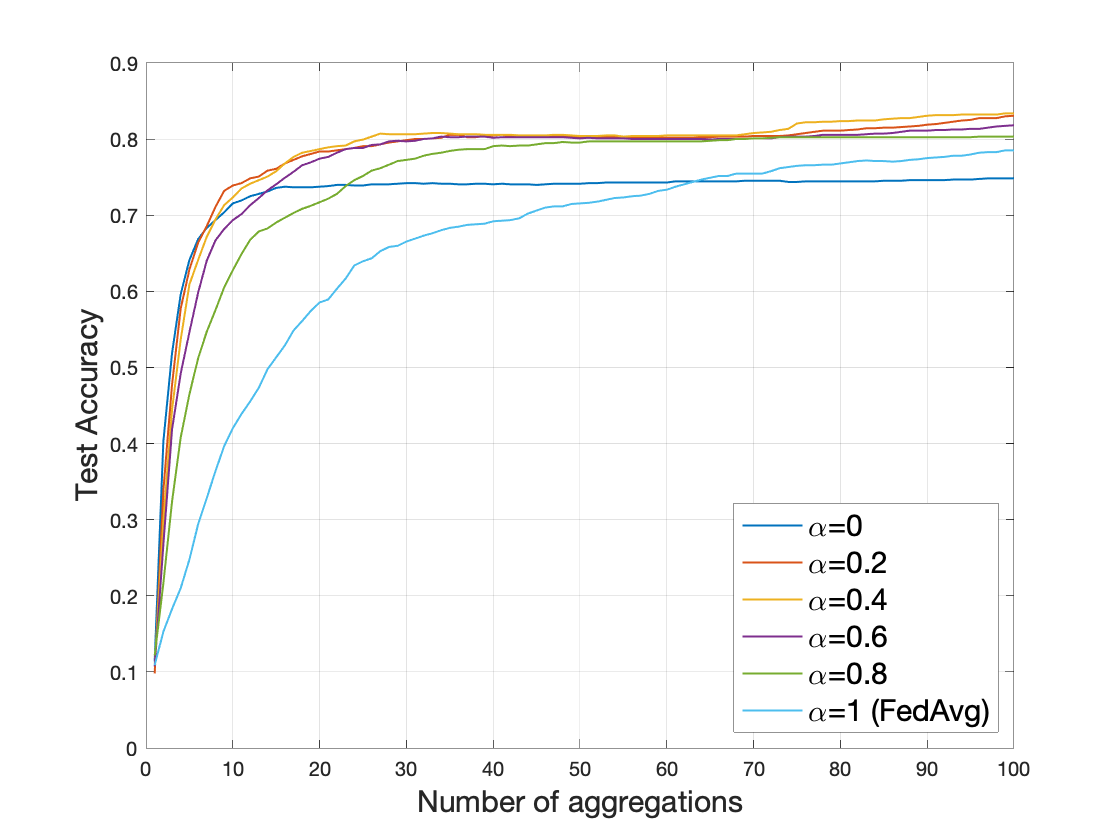}
     \caption{Accuracy w.r.t different $\alpha$, for a fixed $\Delta = 9$. Selection of $\alpha$ strongly affects rate at which accuracy converges.}
     \label{fig3}
\end{figure}

Fig.2 depicts the improvement in testing accuracy by aggregation for different values of $\alpha$. The convergence speed varies with respect to different values of $\alpha$. Compared with {\tt FedAvg} ($\alpha=1$), {\tt FedDelAvg} gains the best improvement with $\alpha=0.2$: the model reaches an accuracy of roughly $80\%$ in $78\%$ fewer training iterations.

Observing the best selection of $\alpha$ in Fig.2, we set $\alpha=0.2$ and compare {\tt FedDelAvg} with {\tt FedAvg} when delay is either negligible or non-negligible ($\Delta=0$ and $\Delta=9$) in terms of accuracy.
As shown in Fig. \ref{fig4}, when communication delay between edge and cloud is negligible ($\Delta=0$), {\tt FedAvg} obtains the best performance in terms of convergence rate, verifying our conclusion in Theorem \ref{thm1} that $\alpha=1$ is the  optimum when $\Delta=0$. To further demonstrate the delay robustness of {\tt FedDelAvg}, we set the performance of {\tt FedAvg} with no delay ($\Delta=0$) as benchmark and compare it with {\tt FedDelAvg} under delay ($\Delta=9$) when $\alpha$ is optimized. 
We observe that, even when the delay is non-negligible, {\tt FedDelAvg} achieves an accuracy of 80\% while only requiring 10\% extra training iterations compared with the benchmark. After 100 aggregations, {\tt FedDelAvg} achieves an accuracy within 3\% of the benchmark, whereas {\tt FedAvg} has a severely degraded performance, thus demonstrating the delay-robustness of the proposed algorithm.

\afterpage{%
\begin{figure}
\centering
    \includegraphics[scale=0.23]{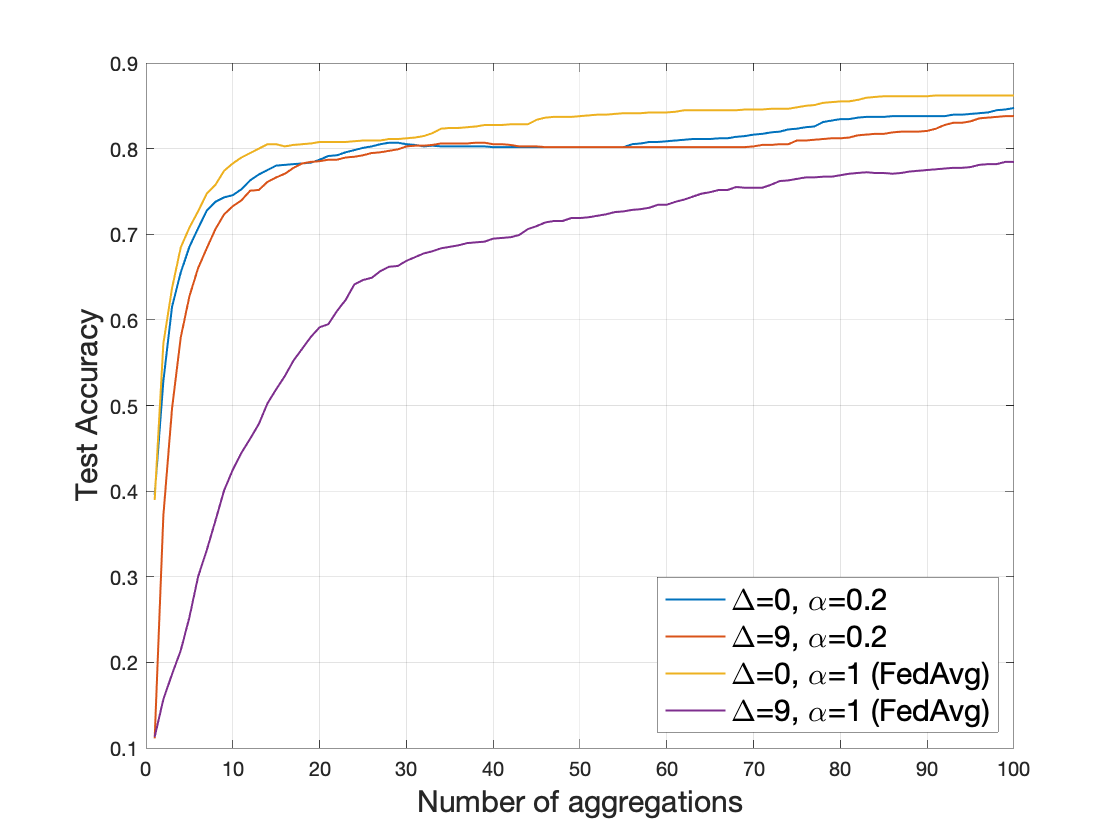}
     \caption{Accuracy w.r.t different communication delays. {\tt FedDelAvg} under delay is able to obtain a similar convergence rate to the case of no delay when $\alpha$ is tuned.}
     \label{fig4}
\end{figure}
}

\section{Conclusion}
This paper proposed {\tt FedDelAvg}, a generalized federated averaging algorithm that incorporates communication delay in edge networks. Analysis of the convergence bound of {\tt FedDelAvg} was conducted with respect to dissimilarities of local data at each device.
The experimental results demonstrated the impact of delays on federated learning and the delay-robustness of {\tt FedDelAvg}. Overall, we found that the global model converges significantly faster when the synchronization weighting is optimized for the delay compared with existing FL algorithms where local models are not considered during synchronization.



\bibliographystyle{IEEEtran}
\bibliography{ref}

\begin{thebibliography}{10}
\providecommand{\url}[1]{#1}
\csname url@samestyle\endcsname
\providecommand{\newblock}{\relax}
\providecommand{\bibinfo}[2]{#2}
\providecommand{\BIBentrySTDinterwordspacing}{\spaceskip=0pt\relax}
\providecommand{\BIBentryALTinterwordstretchfactor}{4}
\providecommand{\BIBentryALTinterwordspacing}{\spaceskip=\fontdimen2\font plus
\BIBentryALTinterwordstretchfactor\fontdimen3\font minus
  \fontdimen4\font\relax}
\providecommand{\BIBforeignlanguage}[2]{{%
\expandafter\ifx\csname l@#1\endcsname\relax
\typeout{** WARNING: IEEEtran.bst: No hyphenation pattern has been}%
\typeout{** loaded for the language `#1'. Using the pattern for}%
\typeout{** the default language instead.}%
\else
\language=\csname l@#1\endcsname
\fi
#2}}
\providecommand{\BIBdecl}{\relax}
\BIBdecl

\bibitem{CVN}
V.~Cisco, ``Cisco visual networking index: Forecast and trends, 2017--2022,''
  \emph{White Paper}, vol.~1, 2018.

\bibitem{Chiang}
M.~Chiang and T.~Zhang, ``Fog and iot: An overview of research opportunities,''
  \emph{IEEE Internet Thing J.}, vol.~3, no.~6, pp. 854--864, 2016.

\bibitem{McMahan}
B.~McMahan, E.~Moore, D.~Ramage, S.~Hampson, and B.~A. Y.~Arcas,
  ``Communication-efficient learning of deep networks from decentralized
  data,'' \emph{Proc. 20th Int. Conf. Artif. Intell. Stat.}, vol.~54, pp.
  1273--1282, 2017.

\bibitem{Kairouz}
P.~Kairouz \emph{et~al.}, ``Advances and open problems in federated learning,''
  \emph{arXiv:1912.04977}, 2019.

\bibitem{K2}
J.~Konečný \emph{et~al.}, ``Federated learning: Strategies for improving
  communication efficiency,'' in \emph{Proc. NIPS Workshop on Private
  Multi-Party Mach. Learn.}, 2016.

\bibitem{Lin}
Y.~Lin, S.~Han, H.~Mao, Y.~Wang, and B.~Dally, ``Deep gradient compression:
  Reducing the communication bandwidth for distributed training,'' in
  \emph{Proc. Int. Conf. Learn. Representations}, 2018.

\bibitem{K3}
J.~Kone{\v{c}}n{\`y} and P.~Richt{\'a}rik, ``Randomized distributed mean
  estimation: Accuracy vs. communication,'' \emph{Front. Appl. Math. Stat.},
  vol.~4, p.~62, 2018.

\bibitem{Horvath}
S.~Horvath, C.-Y. Ho, L.~Horvath, A.~N. Sahu, M.~Canini, and P.~Richtarik,
  ``Natural compression for distributed deep learning,''
  \emph{arXiv:1905.10988}, 2019.

\bibitem{Tu}
Y.~Tu, Y.~Ruan, S.~Wagle, C.~Brinton, and C.~Joe-Wong, ``Network-aware
  optimization of distributed learning for fog computing,'' \emph{Proc. IEEE
  INFOCOM}, pp. 1--9, 2020.

\bibitem{Khaled}
A.~Khaled, K.~Mishchenko, and P.~Richt{\'a}rik, ``First analysis of local gd on
  heterogeneous data,'' \emph{arXiv:1909.04715}, 2019.

\bibitem{Li}
X.~Li, K.~Huang, W.~Yang, S.~Wang, and Z.~Zhang, ``On the convergence of fedavg
  on non-iid data,'' \emph{arXiv:1907.02189}, 2019.

\bibitem{Yu}
H.~Yu, R.~Jin, and S.~Yang, ``On the linear speedup analysis of communication
  efficient momentum sgd for distributed non-convex optimization,''
  \emph{arXiv:1905.03817}, 2019.

\bibitem{Wang}
S.~Wang, T.~Tuor, T.~Salonidis, K.~K. Leung, C.~Makaya, T.~He, and K.~Chan,
  ``Adaptive federated learning in resource constrained edge computing
  systems,'' \emph{IEEE J. Sel. Areas Commun.}, vol.~37, no.~6, pp. 1205--1221,
  2019.

\bibitem{Bubeck}
S.~Bubeck \emph{et~al.}, ``Convex optimization: Algorithms and complexity,''
  \emph{Foundations and Trends{\textregistered} in Machine Learning}, vol.~8,
  no. 3-4, pp. 231--357, 2015.

\bibitem{tf}
M.~Abadi \emph{et~al.}, ``{TensorFlow}: Large-scale machine learning on
  heterogeneous systems,'' \emph{Proc. USENIX Symp. Oper. Syst. Design
  Implement. (OSDI)}, vol.~16, pp. 265--283, 2016.

\bibitem{LeCun}
Y.~LeCun, L.~Bottou, Y.~Bengio, and P.~Haffner, ``Gradient-based learning
  applied to document recognition,'' \emph{Proc. IEEE}, vol.~86, no.~11, pp.
  2278--2324, 1998.

\end{thebibliography}

\appendices
\setcounter{lemma}{0}
\setcounter{proposition}{0}
\setcounter{theorem}{0}
\section{}

\subsection{Proof of Lemma \ref{norm}}
\begin{lemma}
\label{norm}
Under any $F$ satisfying Assumption \ref{assum1}, we have
$$
\Vert
[\mathbf w_1-\eta\nabla F(\mathbf w_1)]
-[\mathbf w_2- \eta\nabla F(\mathbf w_2)]
\Vert
\leq \sqrt{1+\beta^2\eta^2}\Vert\mathbf w_1-\mathbf w_2\Vert
$$$$
\leq (1+\beta\eta)\Vert\mathbf w_1-\mathbf w_2\Vert.
$$
\end{lemma}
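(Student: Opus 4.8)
The plan is to reduce everything to the squared Euclidean norm, where the gradient-descent map interacts cleanly with convexity and smoothness. Set $\mathbf a = \mathbf w_1 - \mathbf w_2$ and $\mathbf b = \nabla F(\mathbf w_1) - \nabla F(\mathbf w_2)$, so that the quantity to bound is $\Vert \mathbf a - \eta\mathbf b\Vert$. First I would expand the square:
\[
\Vert \mathbf a - \eta\mathbf b\Vert^2 = \Vert\mathbf a\Vert^2 - 2\eta\,\langle \mathbf a,\mathbf b\rangle + \eta^2\Vert\mathbf b\Vert^2.
\]
The whole proof then comes down to controlling the two non-diagonal pieces using the two properties of $F$ in Assumption \ref{assum1}.

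The first ingredient is that the cross term is nonnegative. Since $F$ is convex and continuously differentiable, its gradient is monotone, i.e. $\langle \nabla F(\mathbf w_1) - \nabla F(\mathbf w_2),\, \mathbf w_1 - \mathbf w_2\rangle \ge 0$, which is exactly $\langle \mathbf a,\mathbf b\rangle \ge 0$. Because $\eta>0$, the term $-2\eta\langle\mathbf a,\mathbf b\rangle$ is therefore $\le 0$ and may simply be dropped. The second ingredient is $\beta$-smoothness, which gives $\Vert\mathbf b\Vert = \Vert\nabla F(\mathbf w_1)-\nabla F(\mathbf w_2)\Vert \le \beta\Vert\mathbf w_1-\mathbf w_2\Vert = \beta\Vert\mathbf a\Vert$, hence $\eta^2\Vert\mathbf b\Vert^2 \le \beta^2\eta^2\Vert\mathbf a\Vert^2$. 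Combining the two estimates yields $\Vert \mathbf a - \eta\mathbf b\Vert^2 \le (1+\beta^2\eta^2)\Vert\mathbf a\Vert^2$, and taking square roots gives the first claimed inequality $\Vert \mathbf a - \eta\mathbf b\Vert \le \sqrt{1+\beta^2\eta^2}\,\Vert\mathbf a\Vert$.

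The second inequality is an elementary scalar bound: since $2\beta\eta\ge 0$ we have $1+\beta^2\eta^2 \le 1 + 2\beta\eta + \beta^2\eta^2 = (1+\beta\eta)^2$, so $\sqrt{1+\beta^2\eta^2} \le 1+\beta\eta$, completing the chain. I expect the only genuinely load-bearing step to be the nonnegativity of $\langle\mathbf a,\mathbf b\rangle$; everything else is direct substitution of the smoothness constant and a one-line algebraic comparison. It is worth noting that this argument uses only convexity and $\beta$-smoothness — neither the $L$-Lipschitz property nor the restriction $\eta<2/\beta$ is needed here, so the lemma holds for all $\eta>0$ and will serve as the basic one-step contraction estimate underlying the recursive bounds in Lemmas \ref{epsilon}--\ref{lem5}.
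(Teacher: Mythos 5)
Your proof is correct and follows essentially the same route as the paper: expand the squared norm, use monotonicity of the gradient of a convex function to drop the cross term, apply $\beta$-smoothness to the remaining gradient term, and finish with the elementary comparison $1+\beta^2\eta^2\leq(1+\beta\eta)^2$. The only cosmetic difference is that the paper derives gradient monotonicity explicitly by summing the two first-order convexity inequalities, whereas you invoke it directly.
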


\begin{proof}
Note that, from the convexity of $F$, it follows that
\begin{align}
&F(\mathbf w_2)\geq F(\mathbf w_1)+(\mathbf w_2-\mathbf w_1)^T\nabla F(\mathbf w_1)\\
&
F(\mathbf w_1)\geq F(\mathbf w_2)+(\mathbf w_1-\mathbf w_2)^T\nabla F(\mathbf w_2).
\end{align}
Summing both inequalities, we obtain
$$
(\mathbf w_2-\mathbf w_1)^T(\nabla F(\mathbf w_2)-\nabla F(\mathbf w_1)\geq0.
$$
Then,
\begin{align}
&\Vert
[\mathbf w_1-\eta\nabla F(\mathbf w_1)]
-[\mathbf w_2- \eta\nabla F(\mathbf w_2)]
\Vert^2
\\&
=
\Vert
\mathbf w_1-\mathbf w_2
\Vert^2
+\eta^2\Vert
\nabla F(\mathbf w_1)-\nabla F(\mathbf w_2)
\Vert^2
\\&
-2[\mathbf w_2-\mathbf w_1]
[\nabla F(\mathbf w_2)-\eta\nabla F(\mathbf w_1)]
\\&
\leq
(1+\beta^2\eta^2)\Vert
\mathbf w_1-\mathbf w_2
\Vert^2,
\end{align}
where the last inequality follows from the fact that 
$F$ is $\beta$-smooth.
The result of the lemma thus follows.
\end{proof}

\subsection{Proof of Lemma \ref{gradNorm}}
\begin{lemma} \label{gradNorm}
Under Assumption \ref{assum1},
\begin{align}
\Vert\nabla F_i(\mathbf w)\Vert &\leq L,\ \forall i,\forall\mathbf w.
\end{align}
\end{lemma}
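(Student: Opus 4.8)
The plan is to exploit the equivalence, for differentiable functions, between $L$-Lipschitz continuity and a uniform bound on the gradient norm. Concretely, I would fix an arbitrary device index $i$ and an arbitrary point $\mathbf w$, and aim to show $\Vert\nabla F_i(\mathbf w)\Vert \leq L$ by probing $F_i$ along the direction of its own gradient and comparing the resulting rate of change against the Lipschitz constant.

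First I would dispose of the trivial case: if $\nabla F_i(\mathbf w) = 0$, then $\Vert\nabla F_i(\mathbf w)\Vert = 0 \leq L$ since $L \in [0,\infty)$, and we are done. So assume $\nabla F_i(\mathbf w) \neq 0$ and define the unit vector $\mathbf u = \nabla F_i(\mathbf w)/\Vert\nabla F_i(\mathbf w)\Vert$. The next step is to write the directional derivative of $F_i$ at $\mathbf w$ along $\mathbf u$ as a limit of difference quotients, using the differentiability guaranteed by Assumption \ref{assum1}:
\begin{align*}
\lim_{h\to 0^+}\frac{F_i(\mathbf w+h\mathbf u)-F_i(\mathbf w)}{h}
= \nabla F_i(\mathbf w)^T\mathbf u
= \Vert\nabla F_i(\mathbf w)\Vert,
\end{align*}
where the final equality follows from the choice of $\mathbf u$ as the normalized gradient.

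I would then bound the same difference quotient from above using the $L$-Lipschitz property in Assumption \ref{assum1}. For each $h>0$,
\begin{align*}
\frac{|F_i(\mathbf w+h\mathbf u)-F_i(\mathbf w)|}{h}
\leq \frac{L\Vert h\mathbf u\Vert}{h}
= L,
\end{align*}
since $\Vert\mathbf u\Vert = 1$. Taking $h\to 0^+$ and combining with the limit identity above yields $\Vert\nabla F_i(\mathbf w)\Vert \leq L$. Because $\mathbf w$ and $i$ were arbitrary, the bound holds for all $i$ and all $\mathbf w$, completing the argument.

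I do not anticipate a genuinely hard step here: the only subtleties are the elementary book-keeping of ensuring the gradient is nonzero before normalizing, and correctly passing the Lipschitz bound through the limit of the difference quotient. The crux is simply recognizing that evaluating the directional derivative \emph{in the direction of the gradient itself} recovers the full gradient norm, so that the Lipschitz bound on the rate of change translates directly into the desired bound on $\Vert\nabla F_i(\mathbf w)\Vert$.
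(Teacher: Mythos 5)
Your proof is correct, but it takes a different route from the paper's. The paper combines the first-order convexity inequality $\langle\mathbf w'-\mathbf w,\nabla F_i(\mathbf w)\rangle\leq F_i(\mathbf w')-F_i(\mathbf w)$ with the Lipschitz bound $F_i(\mathbf w')-F_i(\mathbf w)\leq L\Vert\mathbf w'-\mathbf w\Vert$, and then substitutes $\mathbf w'$ displaced from $\mathbf w$ along the gradient direction, so that the left side becomes $\Vert\nabla F_i(\mathbf w)\Vert^2$ and cancellation gives the claim (the paper writes $\mathbf w'=\mathbf w-\nabla F_i(\mathbf w)$, which is a sign slip; the intended substitution is $\mathbf w'=\mathbf w+\nabla F_i(\mathbf w)$). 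You instead avoid convexity entirely: you use only differentiability to identify the directional derivative along the normalized gradient with $\Vert\nabla F_i(\mathbf w)\Vert$, and pass the Lipschitz bound on the difference quotients through the limit $h\to 0^+$. Both arguments hinge on the same core idea of probing $F_i$ along its own gradient direction, but yours is strictly more general, since it establishes that any differentiable $L$-Lipschitz function has gradient norm at most $L$ without assuming convexity, whereas the paper's one-line substitution trades that generality for avoiding any limiting argument. Your handling of the degenerate case $\nabla F_i(\mathbf w)=0$ and the normalization are both fine.
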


\begin{proof}  
Note that convex and $L$-Lipschitz conditions imply, $\forall \mathbf w',\mathbf w$,
\begin{align} \label{eq:15}
\langle\mathbf w'-\mathbf w,\nabla F_i(\mathbf w)\rangle\leq &F_i(\mathbf w')-F_i(\mathbf w)\\
F_i(\mathbf w')-F_i(\mathbf w)\leq &L\Vert\mathbf w'-\mathbf w\Vert
\end{align}
Let $\mathbf w'=\mathbf w-\nabla F_i(\mathbf w)$, then we prove that
\begin{align}
\Vert\nabla F(\mathbf w)\Vert\leq L.
\end{align}
\end{proof}

\subsection{Proof of Lemma \ref{epsilon}}
\begin{lemma} \label{epsilon}
Under Assumption \ref{assum1}, with learning rate $\eta < \frac{2}{\beta}$, we have
\begin{align} \label{eq:34}
    \Vert\mathbf w_i(k\tau-\Delta)-\mathbf w(k\tau-\Delta)\Vert \leq \epsilon_i^{(k)}
\end{align}

for all $k \in\{0,1,\dots,K\}$, where
\begin{align}
\epsilon^{(k)}=
[1-(1-\alpha)^{k}]2\eta L(\tau/\alpha-\Delta).
\end{align}

\end{lemma}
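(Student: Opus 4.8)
The plan is to control the per-device deviation from the average, $e_i(t)\triangleq\Vert\mathbf w_i(t)-\mathbf w(t)\Vert$, where $\mathbf w(t)=\sum_j\rho_j\mathbf w_j(t)$ is treated as a purely mathematical quantity defined at every $t$ (the cloud need not actually compute it off the synchronization grid). The goal is to derive a one-period recursion for $e_i^{(k)}\triangleq e_i(k\tau-\Delta)$ and then solve it. Two ingredients drive the argument: the growth of $e_i(t)$ under an ordinary local GD step, and its contraction at the synchronization instant $t=k\tau$.

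First I would bound the growth over an ordinary local update. Writing $\mathbf w(t)=\mathbf w(t-1)-\eta\sum_j\rho_j\nabla F_j(\mathbf w_j(t-1))$ and subtracting it from the local update \eqref{8}, the triangle inequality together with the gradient bound $\Vert\nabla F_i\Vert\leq L$ from Lemma \ref{gradNorm} (and $\sum_j\rho_j=1$) yields $e_i(t)\leq e_i(t-1)+2\eta L$. Thus every plain GD step inflates the deviation by at most $2\eta L$.

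Second I would treat the synchronization update \eqref{9}. Averaging \eqref{9} over $j$ with weights $\rho_j$ gives $\mathbf w(k\tau)=\alpha\mathbf w(k\tau-\Delta)+(1-\alpha)\sum_j\rho_j[\mathbf w_j(k\tau-1)-\eta\nabla F_j(\mathbf w_j(k\tau-1))]$, so the common global term $\alpha\mathbf w(k\tau-\Delta)$ cancels when forming $\mathbf w_i(k\tau)-\mathbf w(k\tau)$, leaving a factor $(1-\alpha)$ multiplying the post-GD deviation. Combined with the per-step bound this gives $e_i(k\tau)\leq(1-\alpha)\big[e_i(k\tau-1)+2\eta L\big]$. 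This contraction is the crux of the lemma, and the step I expect to require the most care is the bookkeeping of exactly how many plain GD steps precede and follow the synchronization within $\mathcal T_k$ (namely $\Delta-1$ before and $\tau-\Delta$ after), since an off-by-one there changes the constant.

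Chaining these bounds across the period $\mathcal T_k$ — $\Delta-1$ plain steps from $k\tau-\Delta$ to $k\tau-1$, the contracting synchronization at $k\tau$, then $\tau-\Delta$ plain steps to $(k+1)\tau-\Delta$ — collapses, after collecting the constant $2\eta L[(1-\alpha)\Delta+(\tau-\Delta)]=2\eta L(\tau-\alpha\Delta)$, to the affine recursion $e_i^{(k+1)}\leq(1-\alpha)e_i^{(k)}+2\eta L(\tau-\alpha\Delta)$. Since all devices share the same initialization $\mathbf w_i(-\Delta)=\mathbf w(-\Delta)$, the base case is $e_i^{(0)}=0$. A routine induction (equivalently, solving the affine recursion, whose fixed point is $2\eta L(\tau/\alpha-\Delta)$) then gives $e_i^{(k)}\leq[1-(1-\alpha)^{k}]\,2\eta L(\tau/\alpha-\Delta)=\epsilon^{(k)}$, as claimed; note the derivation uses only the uniform gradient bound, so the resulting bound is identical across all devices $i$.
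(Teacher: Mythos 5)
Your proposal is correct and follows essentially the same route as the paper's proof: both hinge on the cancellation of the common term $\alpha\,\mathbf w(k\tau-\Delta)$ at synchronization (yielding the $(1-\alpha)$ contraction), the uniform gradient bound $\Vert\nabla F_i\Vert\leq L$ from Lemma~\ref{gradNorm} (yielding the $2\eta L$ per-step drift), and the resulting affine recursion $e_i^{(k+1)}\leq(1-\alpha)e_i^{(k)}+2\eta L(\tau-\alpha\Delta)$ solved from the zero initial condition. The only difference is organizational --- you accumulate the drift step by step while the paper unrolls the whole period at once (retaining, then discarding, a slightly tighter $(1-\rho_i)$ factor) --- and your bookkeeping of the $\Delta-1$ and $\tau-\Delta$ plain steps around the synchronization instant is correct.
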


\begin{proof} 
From \eqref{8}, we find that
$k\tau<t<(k+1)\tau$
\begin{align}
\mathbf w_i((k+1)\tau-\Delta)
 =& 
\mathbf w_i(k\tau) 
-\eta\sum_{r=1}^{\tau-\Delta}
\nabla F_i(\mathbf w_i((k+1)\tau-\Delta-r))
;
\end{align}
moreover, from \eqref{9}
    \begin{align}
    \label{eq:29}
    \nonumber
&        \mathbf w_i(k\tau)
        = \alpha \mathbf w(k\tau-\Delta)+(1-\alpha)
        [\mathbf w_i(k\tau-1)-\eta\nabla F_i(\mathbf w_i(k\tau-1))]
        \\&\nonumber
        =\alpha \mathbf w(k\tau-\Delta)+(1-\alpha)\mathbf w_i(k\tau-\Delta)
        \\
        &- (1-\alpha)\eta\sum\limits_{r=1}^{\Delta}\nabla F_i(\mathbf w_i(k\tau-r));        
    \end{align}
after    combining, we obtain
\begin{align}
\nonumber
&\mathbf w_i((k+1)\tau-\Delta)
 =
\alpha \mathbf w(k\tau-\Delta)
+(1-\alpha)\mathbf w_i(k\tau-\Delta)
\\&\nonumber
- (1-\alpha)\eta\sum\limits_{r=1}^{\Delta}
\nabla F_i(\mathbf w_i(k\tau-r))
\\&     
-\eta\sum_{r=1}^{\tau-\Delta}
\nabla F_i(\mathbf w_i((k+1)\tau-\Delta-r)).
\end{align}
Therefore,
\begin{align*}
&\mathbf w_i((k+1)\tau-\Delta)
-\mathbf w((k+1)\tau-\Delta)
\\&
=
\mathbf w_i((k+1)\tau-\Delta)
-\sum_j\rho_j\mathbf w_j((k+1)\tau-\Delta)
\\&
=
(1-\alpha)[\mathbf w_i(k\tau-\Delta)-\mathbf w(k\tau-\Delta)]
\\&
- (1-\alpha)\eta(1-\rho_i)\sum\limits_{r=1}^{\Delta}
\nabla F_i(\mathbf w_i(k\tau-r))
\\&
+(1-\alpha)\eta\sum\limits_{r=1}^{\Delta}
\sum_{j\neq i}\rho_j\nabla F_j(\mathbf w_j(k\tau-r))
\\&     
- \eta\sum_{r=1}^{\tau-\Delta}
(1-\rho_i)\nabla F_i(\mathbf w_i((k+1)\tau-\Delta-r))
\\&\qquad
+\eta\sum_{r=1}^{\tau-\Delta}
\sum_{j\neq i}\rho_j\nabla F_i(\mathbf w_j((k+1)\tau-\Delta-r)).
\numberthis
\end{align*}
Computing the norm and using the triangular inequality, we then obtain
\begin{align*}
&\Vert\mathbf w_i((k+1)\tau-\Delta)
-\mathbf w((k+1)\tau-\Delta)\Vert
\\&
\leq
(1-\alpha)\Vert
\mathbf w_i(k\tau-\Delta)-\mathbf w(k\tau-\Delta)\Vert
\\&
+(1-\alpha)\eta(1-\rho_i)\sum\limits_{r=1}^{\Delta}
\Vert\nabla F_i(\mathbf w_i(k\tau-r))\Vert
\\&
+(1-\alpha)\eta\sum\limits_{r=1}^{\Delta}
\sum_{j\neq i}\rho_j\Vert\nabla F_j(\mathbf w_j(k\tau-r))\Vert
\\&     
+\eta\sum_{r=1}^{\tau-\Delta}
(1-\rho_i)\Vert\nabla F_i(\mathbf w_i((k+1)\tau-\Delta-r))\Vert
\\&\qquad
+\eta\sum_{r=1}^{\tau-\Delta}
\sum_{j\neq i}\rho_j
\Vert\nabla F_i(\mathbf w_j((k+1)\tau-\Delta-r))\Vert.
\numberthis
\end{align*}
Finally, using Lemma \ref{gradNorm} we obtain
\begin{align*}
&\Vert\mathbf w_i((k+1)\tau-\Delta)
-\mathbf w((k+1)\tau-\Delta)\Vert
\\&
\leq
(1-\alpha)\Vert
\mathbf w_i(k\tau-\Delta)-\mathbf w(k\tau-\Delta)\Vert
\\&
+2\eta L(1-\rho_i)(\tau-\alpha\Delta).
  \numberthis
\end{align*}
By induction, we then find
\begin{align*}
&
\Vert\mathbf w_i(k\tau-\Delta)-\mathbf w(k\tau-\Delta)\Vert
\leq
(1-\alpha)^{k}
\Vert
\mathbf w_i(-\Delta)-\mathbf w(-\Delta)\Vert
\\&
+[1-(1-\alpha)^{k}]
2\eta L(\tau/\alpha-\Delta)
\triangleq \epsilon^{(k)},
  \numberthis
\end{align*}
and we obtain the desired result by initializing 
$\mathbf w_i(-\Delta)=\mathbf w(-\Delta),\forall i$.

\end{proof}

\subsection{Proof of Lemma \ref{localDiff}}
\begin{lemma} \label{localDiff}
Under Assumption \ref{assum1}, with learning rate $\eta<\frac{2}{\beta}$,
we have, for $t\in(k\tau-\Delta,(k+1)\tau-\Delta),t\neq k\tau$,
\begin{align*}
&
\Vert\mathbf w_i(t)-\boldsymbol{c}_k(t)\Vert
\leq
(1+\eta\beta)\Vert\mathbf w_i(t-1)-\boldsymbol{c}_k(t-1)\Vert
+\eta\delta_i.
\end{align*}

\end{lemma}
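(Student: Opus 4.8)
The plan is to exploit the fact that on any time step $t\in\mathcal T_k$ with $t\neq k\tau$ both sequences evolve by a single, un-aggregated gradient descent step, so their difference can be controlled by a one-step contraction argument. First I would write out the two updates explicitly: the local iterate follows $\mathbf w_i(t)=\mathbf w_i(t-1)-\eta\nabla F_i(\mathbf w_i(t-1))$ from \eqref{8}, while the auxiliary centralized iterate follows $\boldsymbol{c}_k(t)=\boldsymbol{c}_k(t-1)-\eta\nabla F(\boldsymbol{c}_k(t-1))$ from \eqref{eq:10}. Subtracting these, the obstacle becomes immediately visible: the two gradient-mapping operators use \emph{different} objectives (the local $F_i$ versus the global $F$), so Lemma \ref{norm}, which bounds the deviation of a single gradient map, does not apply directly.

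The key step is a standard add-and-subtract decomposition. I would insert $\pm\,\eta\nabla F(\mathbf w_i(t-1))$ into the difference and regroup as
\begin{align*}
\mathbf w_i(t)-\boldsymbol{c}_k(t)
&=\big[\mathbf w_i(t-1)-\eta\nabla F(\mathbf w_i(t-1))\big]
-\big[\boldsymbol{c}_k(t-1)-\eta\nabla F(\boldsymbol{c}_k(t-1))\big]\\
&\quad+\eta\big[\nabla F(\mathbf w_i(t-1))-\nabla F_i(\mathbf w_i(t-1))\big].
\end{align*}
Now the first bracketed pair is precisely the gradient map of the \emph{same} function $F$ evaluated at two points, so after taking norms and applying the triangle inequality, Lemma \ref{norm} (which holds for the global $F$, since Assumption \ref{assum1} carries over to $F$ via \eqref{eq:2}) bounds this piece by $(1+\eta\beta)\Vert\mathbf w_i(t-1)-\boldsymbol{c}_k(t-1)\Vert$.

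For the residual term I would invoke the data-dissimilarity Assumption \ref{def:2}: by \eqref{eq:11}, $\Vert\nabla F(\mathbf w_i(t-1))-\nabla F_i(\mathbf w_i(t-1))\Vert\le\delta_i$, so this term contributes at most $\eta\delta_i$. Combining the two bounds through the triangle inequality then yields the claimed recursion. I expect no genuine difficulty beyond correctly choosing the add-subtract term so that one piece becomes a single-objective gradient-map deviation (amenable to Lemma \ref{norm}) while the other becomes the gradient mismatch controlled by $\delta_i$. The restriction $t\neq k\tau$ is what guarantees the local update is a plain GD step with no synchronization (aggregation) component, which is exactly what makes this clean comparison possible; the endpoint $t=k\tau$ must instead be handled separately through the aggregation analysis.
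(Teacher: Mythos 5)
Your proof is correct and follows the same basic add-and-subtract strategy as the paper, but you make a mirrored choice of the inserted cross term. The paper inserts $\pm\,\eta\nabla F_i(\boldsymbol{c}_k(t-1))$, so that the first piece is $\eta[\nabla F_i(\boldsymbol{c}_k(t-1))-\nabla F_i(\mathbf w_i(t-1))]$, handled by the plain triangle inequality together with $\beta$-smoothness of $F_i$ (no appeal to Lemma~\ref{norm} is needed here), and the dissimilarity bound of Assumption~\ref{def:2} is applied at the point $\boldsymbol{c}_k(t-1)$. You instead insert $\pm\,\eta\nabla F(\mathbf w_i(t-1))$, turning the first piece into a gradient-map deviation of the global $F$, bounded via Lemma~\ref{norm} (valid, since Assumption~\ref{assum1} carries over to $F$ through \eqref{eq:2}), and you apply Assumption~\ref{def:2} at $\mathbf w_i(t-1)$; since that assumption holds for all $\mathbf w$, this is equally legitimate. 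The two routes yield the identical recursion; yours invokes convexity through Lemma~\ref{norm} where the paper's needs only smoothness, but in exchange it would let you keep the slightly tighter factor $\sqrt{1+\beta^2\eta^2}$ if one wanted it. Your closing remark about why $t\neq k\tau$ is excluded is exactly the right reading of the lemma's scope.
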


\begin{proof} 
Let $t\in(k\tau-\Delta,(k+1)\tau-\Delta),t\neq k\tau$. We have
\begin{align*}
&\mathbf w_i(t)-\boldsymbol{c}_k(t)
=
\mathbf w_i(t-1)-\boldsymbol{c}_k(t-1)
\\&
+\eta[\nabla F_i(\boldsymbol{c}_k(t-1))
-\nabla F_i(\mathbf w_i(t-1))]
\\&
+\eta[\nabla F(\boldsymbol{c}_k(t-1))
-\nabla F_i(\boldsymbol{c}_k(t-1))].
\end{align*}
Taking the norm and using the triangular inequality, we then obtain
\begin{align*}
&\Vert\mathbf w_i(t)-\boldsymbol{c}_k(t)\Vert
\leq
\Vert\mathbf w_i(t-1)-\boldsymbol{c}_k(t-1)\Vert
\\&
+\eta\Vert\nabla F_i(\boldsymbol{c}_k(t-1))
-\nabla F_i(\mathbf w_i(t-1))\Vert
\\&
+\eta\Vert\nabla F(\boldsymbol{c}_k(t-1))
-\nabla F_i(\boldsymbol{c}_k(t-1))\Vert.
\end{align*}
The result of the lemma is then found by using the $\beta$-smoothness of 
$F_i(\cdot)$ and Assumption \ref{def:2}.
\end{proof}

\subsection{Proof of Lemma \ref{lem5}}
\begin{lemma} \label{lem5}
Under Assumption \ref{assum1} with learning rate $\eta<2/\beta$, we have
\begin{align}
&\Vert\mathbf w(k\tau)-\boldsymbol{c}_k(k\tau)\Vert
\leq
\eta\Delta\left[\alpha L+(1-\alpha)\beta\epsilon^{(k)}\right]
\\&
  +(1-\alpha)\left(\frac{\beta}{\delta}\epsilon^{(k)}+1\right)h(\Delta).
  \\&
  =\alpha \Delta L\eta+(1-\alpha)
  \Big[((1+\eta\beta)^{\Delta} -1)\epsilon^{(k)}+h(\Delta)\Big].
     \end{align}
\end{lemma}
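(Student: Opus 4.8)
The plan is to split the synchronization update into its global and local components and bound each against the centralized trajectory $\boldsymbol{c}_k$. Writing the synchronization step \eqref{9} at $t=k\tau$ and averaging with the weights $\rho_i$, I have $\mathbf w(k\tau) = \alpha\,\mathbf w(k\tau-\Delta) + (1-\alpha)\sum_i\rho_i\,\tilde{\mathbf w}_i$, where $\tilde{\mathbf w}_i \triangleq \mathbf w_i(k\tau-1)-\eta\nabla F_i(\mathbf w_i(k\tau-1))$ is the pure local GD iterate obtained after $\Delta$ steps started from $\mathbf w_i(k\tau-\Delta)$. Inserting the trivial identity $\boldsymbol{c}_k(k\tau)=\alpha\,\boldsymbol{c}_k(k\tau)+(1-\alpha)\boldsymbol{c}_k(k\tau)$ and applying the triangle inequality gives $\Vert\mathbf w(k\tau)-\boldsymbol{c}_k(k\tau)\Vert \le \alpha\Vert\mathbf w(k\tau-\Delta)-\boldsymbol{c}_k(k\tau)\Vert + (1-\alpha)\Vert\sum_i\rho_i\tilde{\mathbf w}_i-\boldsymbol{c}_k(k\tau)\Vert$, isolating a ``global'' and a ``local'' contribution weighted by $\alpha$ and $1-\alpha$.

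For the global term I would use that $\boldsymbol{c}_k$ is initialized at $\boldsymbol{c}_k(k\tau-\Delta)=\mathbf w(k\tau-\Delta)$ (Definition \ref{def:1}), so $\mathbf w(k\tau-\Delta)-\boldsymbol{c}_k(k\tau)$ is exactly the displacement of centralized GD over the $\Delta$ steps from $k\tau-\Delta$ to $k\tau$, namely $\eta\sum_{s=k\tau-\Delta}^{k\tau-1}\nabla F(\boldsymbol{c}_k(s))$. Bounding each gradient by $L$ via Lemma \ref{gradNorm} yields $\Vert\mathbf w(k\tau-\Delta)-\boldsymbol{c}_k(k\tau)\Vert\le\eta\Delta L$, which produces the $\alpha\Delta L\eta$ term.

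The local term is where the real work lies, and it is where I expect the main obstacle. The naive route --- bounding each $\Vert\tilde{\mathbf w}_i-\boldsymbol{c}_k(k\tau)\Vert$ by the Lemma \ref{localDiff} recursion and only then averaging --- would leave a loose $(1+\eta\beta)^\Delta\epsilon^{(k)}+\frac{\delta}{\beta}[(1+\eta\beta)^\Delta-1]$ instead of the claimed, tighter $[(1+\eta\beta)^\Delta-1]\epsilon^{(k)}+h(\Delta)$. The trick is to exploit the averaging \emph{before} taking norms. Since $\sum_i\rho_i\mathbf w_i(k\tau-\Delta)=\mathbf w(k\tau-\Delta)=\boldsymbol{c}_k(k\tau-\Delta)$, the initial positions cancel exactly when I subtract the two $\Delta$-step GD expansions, leaving $\sum_i\rho_i\tilde{\mathbf w}_i-\boldsymbol{c}_k(k\tau) = -\eta\sum_{s=k\tau-\Delta}^{k\tau-1}\big[\sum_i\rho_i\nabla F_i(u_i(s))-\nabla F(\boldsymbol{c}_k(s))\big]$, where $u_i(\cdot)$ is the pure local GD trajectory started at $u_i(k\tau-\Delta)=\mathbf w_i(k\tau-\Delta)$, coinciding with $\mathbf w_i$ on $\{k\tau-\Delta,\dots,k\tau-1\}$ and with $u_i(k\tau)=\tilde{\mathbf w}_i$. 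Then, crucially, $\nabla F=\sum_i\rho_i\nabla F_i$ (from \eqref{eq:2}) turns the bracket into $\sum_i\rho_i[\nabla F_i(u_i(s))-\nabla F_i(\boldsymbol{c}_k(s))]$, so $\beta$-smoothness bounds it by $\beta\sum_i\rho_i\Vert u_i(s)-\boldsymbol{c}_k(s)\Vert$; note the per-device dissimilarity $\delta_i$ is absent at this stage, which is exactly what sharpens the constants.

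To finish, I would insert the Lemma \ref{localDiff} recursion with the Lemma \ref{epsilon} initialization $\Vert u_i(k\tau-\Delta)-\boldsymbol{c}_k(k\tau-\Delta)\Vert\le\epsilon^{(k)}$, giving $\Vert u_i(s)-\boldsymbol{c}_k(s)\Vert\le(1+\eta\beta)^m\epsilon^{(k)}+\frac{\delta_i}{\beta}[(1+\eta\beta)^m-1]$ with $m=s-(k\tau-\Delta)\in\{0,\dots,\Delta-1\}$. Summing $\eta\beta\sum_{m=0}^{\Delta-1}(\cdot)$, using $\sum_i\rho_i\delta_i=\delta$ and the geometric identities $\eta\beta\sum_{m=0}^{\Delta-1}(1+\eta\beta)^m=(1+\eta\beta)^\Delta-1$ and $\eta\delta\sum_{m=0}^{\Delta-1}[(1+\eta\beta)^m-1]=h(\Delta)$, collapses the local term to $[(1+\eta\beta)^\Delta-1]\epsilon^{(k)}+h(\Delta)$; splitting $(1+\eta\beta)^m\epsilon^{(k)}=\epsilon^{(k)}+[(1+\eta\beta)^m-1]\epsilon^{(k)}$ before summing yields the equivalent factored form $\eta\Delta\beta\epsilon^{(k)}+(\frac{\beta}{\delta}\epsilon^{(k)}+1)h(\Delta)$ reported in the first line of the lemma. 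Combining the global and local bounds under the weights $\alpha$ and $1-\alpha$ then gives the stated inequality.
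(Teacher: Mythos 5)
Your proposal is correct and follows essentially the same route as the paper's proof: both cancel the initial positions using $\boldsymbol{c}_k(k\tau-\Delta)=\mathbf w(k\tau-\Delta)$, split the resulting gradient sums into an $\alpha$-weighted term bounded by $\eta\Delta L$ via Lemma~\ref{gradNorm} and a $(1-\alpha)$-weighted difference term handled by $\beta$-smoothness, then feed the Lemma~\ref{localDiff} recursion initialized with $\epsilon^{(k)}$ from Lemma~\ref{epsilon} into the geometric sums to obtain $[(1+\eta\beta)^\Delta-1]\epsilon^{(k)}+h(\Delta)$. Your remark about keeping the $\rho_i$-average inside the norm so that $\nabla F=\sum_i\rho_i\nabla F_i$ eliminates the extra dissimilarity term is exactly the step the paper relies on.
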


\begin{proof} 

\noindent 
Note that, using \eqref{eq:29} 
 \begin{align}
&\mathbf w(k\tau)=\sum_i\rho_i\mathbf w_i(k\tau)
        \\&
        = \mathbf w(k\tau-\Delta)- (1-\alpha)\eta\sum\limits_{r=1}^{\Delta}
        \sum_i\rho_i
        \nabla F_i(\mathbf w_i(k\tau-r)).      
    \end{align}
    Moreover,
    \begin{align}
        \boldsymbol{c}_k(k\tau)&= \boldsymbol{c}_k(k\tau-\Delta)-\eta\sum\limits_{r=1}^{\Delta}\sum\limits_i \rho_i \nabla F_i(\boldsymbol{c}_k(k\tau-r)).
    \end{align}
    Therefore, we obtain
    \begin{align}
&\mathbf w(k\tau)-\boldsymbol{c}_k(k\tau)
= 
\eta\alpha
\sum\limits_{r=1}^{\Delta}\sum_i\rho_i\nabla F_i(\boldsymbol{c}_k(k\tau-r))
        \\&
        - (1-\alpha)\eta\sum\limits_{r=1}^{\Delta}
        \sum_i\rho_i[\nabla F_i(\mathbf w_i(k\tau-r))
        -\nabla F_i(\boldsymbol{c}_k(k\tau-r))]
    \end{align}
    where we used the fact that $\boldsymbol{c}_k(k\tau-\Delta)=\mathbf w(k\tau-\Delta)$.
    Taking the norm and using the triangular inequality, we then obtain
    \begin{align}
&\Vert\mathbf w(k\tau)-\boldsymbol{c}_k(k\tau)\Vert
\leq
\eta\alpha
\sum\limits_{r=1}^{\Delta}\sum_i\rho_i\Vert
\nabla F_i(\boldsymbol{c}_k(k\tau-r))\Vert
        \\&
        +(1-\alpha)\eta\beta\sum\limits_{r=1}^{\Delta}
        \sum_i\rho_i\Vert\mathbf w_i(k\tau-r)
        -\boldsymbol{c}_k(k\tau-r)\Vert.
    \end{align}
    where we used the $\beta$-smoothness of $F_i$ to further upper bound
    $\Vert\nabla F_i(\mathbf w_i(k\tau-r))
        -\nabla F_i(\boldsymbol{c}_k(k\tau-r))\Vert$.
        Using Lemma \ref{gradNorm} and Lemma \ref{localDiff}, we can further bound
        \begin{align}
&\Vert\mathbf w(k\tau)-\boldsymbol{c}_k(k\tau)\Vert
\leq
\eta\alpha L\Delta
 +(1-\alpha)\eta\beta \epsilon^{(k)}\sum_{r=1}^{\Delta}(1+\eta\beta)^{\Delta-r}
        \\&
+\frac{\delta}{\beta}
(1-\alpha)\eta\beta\sum_{r=1}^{\Delta}[(1+\eta\beta)^{\Delta-r}-1],
     \end{align}
     yielding the result in the lemma after algebraic steps.
\end{proof}

\subsection{Proof of Proposition \ref{prop1}}
\begin{proposition} \label{prop1}
Under Assumption \ref{assum1} and learning rate $\eta<2/\beta$, we have
\begin{align*}
\numberthis
&\Vert\mathbf w((k+1)\tau-\Delta)-\boldsymbol{c}_k((k+1)\tau-\Delta)\Vert
\\&
  \leq
  \psi(\alpha,k)\triangleq
(1-\alpha)\epsilon^{(k)}\{[1+\eta\beta]^{\tau}-1\}
\\&
+(1-\alpha)h(\tau)+\alpha h(\tau-\Delta)
\\&
+\alpha\eta\Delta L[1+\eta\beta]^{\tau-\Delta},
\end{align*}
\end{proposition}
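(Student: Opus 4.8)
The plan is to compare the global iterate with the centralized reference $\boldsymbol c_k$ by reducing everything to local quantities through $\mathbf w(t)=\sum_i\rho_i\mathbf w_i(t)$ and the triangle inequality $\Vert\mathbf w(t)-\boldsymbol c_k(t)\Vert\le\sum_i\rho_i\Vert\mathbf w_i(t)-\boldsymbol c_k(t)\Vert$, and then unrolling both trajectories over the whole period $\mathcal T_k$. Because $\boldsymbol c_k$ is initialized at $\boldsymbol c_k(k\tau-\Delta)=\mathbf w(k\tau-\Delta)$ and the only averaging step sits at the interior time $t=k\tau$, I would split $\mathcal T_k$ at $k\tau$ into a pre-synchronization block of $\Delta$ steps and a post-synchronization block of $\tau-\Delta$ pure local-GD steps, and handle the two blocks with different tools.

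First I would pin down the gap at the synchronization instant. Lemma \ref{epsilon} supplies the initial drift $\Vert\mathbf w_i(k\tau-\Delta)-\mathbf w(k\tau-\Delta)\Vert\le\epsilon^{(k)}$, and the one-step recursion of Lemma \ref{localDiff} propagates it through the pre-synchronization steps. Inserting the averaging rule \eqref{9} into $\mathbf w_i(k\tau)-\boldsymbol c_k(k\tau)$ decomposes it into an $\alpha$-weighted piece $\mathbf w(k\tau-\Delta)-\boldsymbol c_k(k\tau)$ and a $(1-\alpha)$-weighted piece that pits one local-GD step against $\boldsymbol c_k$. The first piece is just $\Delta$ unrolled centralized steps, bounded by $\eta\Delta L$ through the gradient-norm estimate of Lemma \ref{gradNorm}; the second obeys Lemma \ref{localDiff}. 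Averaging with the $\rho_i$ (so that $\sum_i\rho_i\delta_i=\delta$) reproduces exactly the synchronization-instant bound of Lemma \ref{lem5}.

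Next I would carry this estimate across the remaining $\tau-\Delta$ local-GD steps by iterating Lemma \ref{localDiff} once more: each step multiplies the running gap by $(1+\eta\beta)$ and adds $\eta\delta$, so the $k\tau$ gap is amplified by $(1+\eta\beta)^{\tau-\Delta}$ while a fresh geometric accumulation of the dissimilarity builds up. Together with the pre-synchronization accumulation these collapse into the $h(\cdot)$ terms, using $h(x)=\frac{\delta}{\beta}[(1+\eta\beta)^x-1]-\eta\delta x$. Keeping the $\alpha$-branch (yielding $\alpha\eta\Delta L(1+\eta\beta)^{\tau-\Delta}$ and $\alpha h(\tau-\Delta)$) separate from the $(1-\alpha)$-branch (yielding $(1-\alpha)\epsilon^{(k)}[(1+\eta\beta)^{\tau}-1]$ and $(1-\alpha)h(\tau)$) then assembles $\psi(\alpha,k)$. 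The companion loss inequality $\Vert F(\mathbf w(\cdot))-F(\boldsymbol c_k(\cdot))\Vert\le L\psi(\alpha,k)$ follows immediately from the $L$-Lipschitz property in Assumption \ref{assum1}.

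The hard part will be the synchronization step together with its constant bookkeeping. Away from $t=k\tau$ both trajectories are ordinary gradient descents and Lemma \ref{localDiff} applies mechanically, but at $t=k\tau$ each local model is partially reset toward the \emph{delayed} global model $\mathbf w(k\tau-\Delta)$, which trails $\boldsymbol c_k(k\tau)$ by $\Delta$ centralized steps. Disentangling the delayed-global ($\alpha$) and continued-local ($1-\alpha$) contributions, threading the already-accumulated drift $\epsilon^{(k)}$ through the post-synchronization amplification so that its coefficient lands precisely on $[(1+\eta\beta)^{\tau}-1]$, and splitting the dissimilarity into $h(\tau)$ versus $h(\tau-\Delta)$ is exactly where the delay $\Delta$ enters the bound. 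This is also where a naive chaining of the global-gap bound of Lemma \ref{lem5} with a single scalar recursion would land the wrong power of $(1+\eta\beta)$ on $\epsilon^{(k)}$, since after synchronization the local models re-diverge; the unrolled, term-by-term matching over the full period is what secures the stated constants.
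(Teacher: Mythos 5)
Your overall architecture (split $\mathcal T_k$ at the synchronization instant $k\tau$, seed the argument with $\epsilon^{(k)}$ from Lemma \ref{epsilon}, propagate through Lemma \ref{localDiff}, and keep the $\alpha$ and $1-\alpha$ branches separate) matches the paper's, but the central mechanism you propose --- reduce the global gap to local quantities via $\Vert\mathbf w(t)-\boldsymbol c_k(t)\Vert\le\sum_i\rho_i\Vert\mathbf w_i(t)-\boldsymbol c_k(t)\Vert$ and then unroll the per-device recursion of Lemma \ref{localDiff} (``each step multiplies the running gap by $(1+\eta\beta)$ and adds $\eta\delta$'') --- does not yield $\psi(\alpha,k)$. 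Carrying that recursion to the end of the period leaves a coefficient $(1-\alpha)(1+\eta\beta)^{\tau}$ on $\epsilon^{(k)}$ and dissimilarity contributions of the form $\frac{\delta}{\beta}[(1+\eta\beta)^{x}-1]$; this is a valid inequality but it exceeds the stated bound by $(1-\alpha)\epsilon^{(k)}+(1-\alpha)\eta\delta\tau+\alpha\eta\delta(\tau-\Delta)$, and no amount of bookkeeping within that scheme can produce the $-\eta\delta x$ correction inside $h(x)$ or the $-1$ in $[(1+\eta\beta)^{\tau}-1]$. Your closing remark about ``term-by-term matching over the full period'' gestures at the difficulty but does not supply the ingredient that resolves it; indeed, amplifying the Lemma \ref{lem5} estimate by $(1+\eta\beta)^{\tau-\Delta}$ is exactly the naive chaining you warn against.

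The missing idea is that the \emph{global} gap obeys a strictly tighter recursion than the per-device one. For non-synchronization steps,
\begin{align*}
\mathbf w(t)-\boldsymbol c_k(t)=\mathbf w(t-1)-\boldsymbol c_k(t-1)-\eta\sum_i\rho_i\bigl[\nabla F_i(\mathbf w_i(t-1))-\nabla F_i(\boldsymbol c_k(t-1))\bigr],
\end{align*}
because the exact identity $\sum_i\rho_i\nabla F_i(\boldsymbol c_k(t-1))=\nabla F(\boldsymbol c_k(t-1))$ makes the $\delta_i$ terms cancel at the aggregate level; the global gap therefore grows only \emph{additively}, by $\eta\beta\sum_i\rho_i\Vert\mathbf w_i(t-1)-\boldsymbol c_k(t-1)\Vert$ per step, with no $(1+\eta\beta)$ amplification and no $\eta\delta$ injection. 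The paper accordingly writes $\Vert\mathbf w((k+1)\tau-\Delta)-\boldsymbol c_k((k+1)\tau-\Delta)\Vert$ as $\eta\beta$ times the \emph{time-accumulated sum} of the per-device deviations over $\mathcal T_k$ (plus $\alpha\eta L\Delta$ from the synchronization step, with the pre-synchronization block damped by $1-\alpha$), and only then bounds each summand by the unrolled Lemma \ref{localDiff}. The geometric summation $\eta\beta\sum_{\ell=0}^{x-1}(1+\eta\beta)^{\ell}=(1+\eta\beta)^{x}-1$ is precisely what converts $(1+\eta\beta)^{\tau}$ into $(1+\eta\beta)^{\tau}-1$ on the $\epsilon^{(k)}$ term and turns the summed $\frac{\delta}{\beta}[(1+\eta\beta)^{\ell}-1]$ into $h(x)$. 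Without this two-level (globally additive over locally multiplicative) structure, your argument proves a weaker bound than the proposition claims.
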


\begin{proof} 
Let $t\in(k\tau-\Delta,(k+1)\tau-\Delta]$. Then from \eqref{9} we have
\begin{align}
  \mathbf w_i =& 
            \alpha_t \mathbf w(k\tau-\Delta)
            \\&
+(1-\alpha_t)
\left[\mathbf w_i(t-1) - \eta\nabla F_i(\mathbf w_i(t-1))\right]
\end{align}
and
\begin{align}
\boldsymbol{c}_k(t) =& 
\boldsymbol{c}_k(t-1)
- \eta\nabla F(\boldsymbol{c}_k(t-1)).
\end{align}
Using the fact that 
$$
\boldsymbol{c}_k(k\tau-1)=
\mathbf w(k\tau-\Delta)
- \eta\sum_{r=0}^{\Delta-2}\nabla F(\boldsymbol{c}_k(k\tau-\Delta+r)),
$$
it follows that 
\begin{align*}
&   \mathbf w(t)-\boldsymbol{c}_k(t)
  =
(1-\alpha_t)[\mathbf w(t-1)-\boldsymbol{c}_k(t-1)]
\\&
- (1-\alpha_t)\eta\sum_i\rho_i
[\nabla F_i(\mathbf w_i(t-1))-\nabla F_i(\boldsymbol{c}_k(t-1))]
\\&
+\eta\alpha_t\sum_{r=0}^{\Delta-1}\nabla F(\boldsymbol{c}_k(k\tau-\Delta+r))
\end{align*}
Taking the norm, using the triangular inequality,
  $\beta$-smoothness of $F_i$,
 and Lemma \ref{gradNorm} to bound $\Vert\nabla F(\boldsymbol{c}_k(t))\Vert$,
  we obtain the inequality
\begin{align*}
&   \Vert\mathbf w(t)-\boldsymbol{c}_k(t)\Vert
\\&
  \leq
(1-\alpha_t)\Vert\mathbf w(t-1)-\boldsymbol{c}_k(t-1)\Vert
\\&
+(1-\alpha_t)\eta\beta\sum_i\rho_i
\Vert\mathbf w_i(t-1)-\boldsymbol{c}_k(t-1)\Vert
\\&
+\alpha_t\eta L\Delta.
\end{align*}
By induction, we then obtain, for $t\in[k\tau-\Delta,k\tau)$
($\alpha_t=0$ for all such $t$)
\begin{align*}
& \Vert\mathbf w(t)-\boldsymbol{c}_k(t)\Vert
  \leq
\eta\beta\sum_{\ell=k\tau-\Delta}^{t-1}
\sum_i\rho_i
\Vert\mathbf w_i(\ell)-\boldsymbol{c}_k(\ell)\Vert,
\end{align*}
where we used the fact that
$\boldsymbol{c}_k(k\tau-\Delta)=\mathbf w(k\tau-\Delta)$,
and
for $t\in[k\tau,(k+1)\tau-\Delta]$ (note that $\alpha_{k\tau}=\alpha$ and
$\alpha_t=0,\forall t>k\tau$)
\begin{align*}
&\Vert\mathbf w(t)-\boldsymbol{c}_k(t)\Vert
\\&
  \leq
(1-\alpha)\eta\beta\sum_{\ell=k\tau-\Delta}^{k\tau-1}
\sum_i\rho_i\Vert\mathbf w_i(\ell)-\boldsymbol{c}_k(\ell)\Vert,
\\&
+\eta\beta \sum_{\ell=k\tau}^{t-1}
\sum_i\rho_i\Vert\mathbf w_i(\ell)-\boldsymbol{c}_k(\ell)\Vert
+\alpha\eta L\Delta.
\end{align*}
Therefore,
\begin{align*}
\label{eqdfgh}
&\Vert\mathbf w((k+1)\tau-\Delta)-\boldsymbol{c}_k((k+1)\tau-\Delta)\Vert
\\&
  \leq
(1-\alpha)\eta\beta\sum_{\ell=k\tau-\Delta}^{k\tau-1}
\sum_i\rho_i\Vert\mathbf w_i(\ell)-\boldsymbol{c}_k(\ell)\Vert,
\\&
+\eta\beta \sum_{\ell=k\tau}^{(k+1)\tau-\Delta-1}
\sum_i\rho_i\Vert\mathbf w_i(\ell)-\boldsymbol{c}_k(\ell)\Vert
+\alpha\eta L\Delta.
\end{align*}
We now bound the term $\sum_i\rho_i
\Vert\mathbf w_i(\ell)-\boldsymbol{c}_k(\ell)\Vert$. Note that 
$\sum_i\rho_i
\Vert\mathbf w_i(k\tau-\Delta)-\boldsymbol{c}_k(k\tau-\Delta)\Vert
=\sum_i\rho_i
\Vert\mathbf w_i(k\tau-\Delta)-\mathbf w(k\tau-\Delta)\Vert
\leq\epsilon^{(k)}$ (Lemma \ref{epsilon}). For $\ell\in(k\tau-\Delta,(k+1)\tau-\Delta]$ we then have
\begin{align*}
&\mathbf w_i(\ell)-\boldsymbol{c}_k(\ell)
=
(1-\alpha_\ell)
\left[\mathbf w_i(\ell-1)-\boldsymbol{c}_k(\ell-1)\right]
\\&
 - (1-\alpha_\ell)\eta
\left[\nabla F_i(\mathbf w_i(\ell-1))
-\nabla F_i(\boldsymbol{c}_k(\ell-1))
\right]
\\&
 - (1-\alpha_\ell)\eta
\left[\nabla F_i(\boldsymbol{c}_k(\ell-1))-\nabla F(\boldsymbol{c}_k(\ell-1))\right]
\\&
+\alpha_\ell\eta\sum_{r=0}^{\Delta-1}\nabla F(\boldsymbol{c}_k(k\tau-\Delta+r)).
\end{align*}
Taking the norm, using the triangular inequality,
 Lemma \ref{norm}, $\beta$-smoothness of $F_i$, 
 definition \ref{def:2}
 and computing the sum $\sum_i\rho_i$,
  we obtain the inequality
\begin{align}
&\sum_i\rho_i\Vert\mathbf w_i(\ell)-\boldsymbol{c}_k(\ell)\Vert
\\&
\leq
(1-\alpha_\ell)[1+\eta\beta]
\sum_i\rho_i\Vert\mathbf w_i(\ell-1)-\boldsymbol{c}_k(\ell-1)\Vert
\\&
+(1-\alpha_\ell)\eta\delta
+\alpha_\ell\eta\Delta L
\end{align}
Using induction,
 it then follows, for $\ell\in[k\tau-\Delta,k\tau-1]$ ($\alpha_\ell=0$)
\begin{align}
&
\sum_i\rho_i\Vert\mathbf w_i(\ell)-\boldsymbol{c}_k(\ell)\Vert
\\&
\leq
[1+\eta\beta]^{\ell-k\tau+\Delta}\epsilon^{(k)}
+\delta\frac{[1+\eta\beta]^{\ell-k\tau+\Delta}-1}{\beta},
\end{align}
and for $\ell\in[k\tau,(k+1)\tau-\Delta]$,
\begin{align}
&
\sum_i\rho_i\Vert\mathbf w_i(\ell)-\boldsymbol{c}_k(\ell)\Vert
\\&
\leq
(1-\alpha)[1+\eta\beta]^{\ell-k\tau+\Delta}\epsilon^{(k)}
\\&
+(1-\alpha)\delta[1+\eta\beta]^{\ell-k\tau}\frac{[1+\eta\beta]^{\Delta}-1}{\beta}
\\&
+\delta\frac{[1+\eta\beta]^{\ell-k\tau}-1}{\beta}
+\alpha\eta\Delta L[1+\eta\beta]^{\ell-k\tau}.
\end{align}
It then follows that
\begin{align*}
&
\sum_{\ell=k\tau-\Delta}^{k\tau-1}
\sum_i\rho_i\Vert\mathbf w_i(\ell)-\boldsymbol{c}_k(\ell)\Vert
\\&
\leq
\epsilon^{(k)}\frac{[1+\eta\beta]^{\Delta}-1}{\eta\beta}
+\frac{h(\Delta)}{\eta\beta},
\end{align*}
and
\begin{align}
&\sum_{\ell=k\tau}^{(k+1)\tau-\Delta-1}
\sum_i\rho_i\Vert\mathbf w_i(\ell)-\boldsymbol{c}_k(\ell)\Vert
\\&
\leq
(1-\alpha)[1+\eta\beta]^{\Delta}\epsilon^{(k)}
\frac{[1+\eta\beta]^{\tau-\Delta}-1}{\eta\beta}
\\&
+(1-\alpha)\frac{h(\tau)-h(\Delta)}{\eta\beta}
+\alpha\frac{h(\tau-\Delta)}{\eta\beta}
\\&
+\alpha\Delta L\frac{[1+\eta\beta]^{\tau-\Delta}-1}{\beta}
\end{align}
Combining these bounds with \eqref{eqdfgh}, we finally obtain
\begin{align*}
\numberthis
&\Vert\mathbf w((k+1)\tau-\Delta)-\boldsymbol{c}_k((k+1)\tau-\Delta)\Vert
\\&
  \leq
(1-\alpha)\epsilon^{(k)}\{[1+\eta\beta]^{\tau}-1\}
\\&
+(1-\alpha)h(\tau)+\alpha h(\tau-\Delta)
\\&
+\alpha\eta\Delta L[1+\eta\beta]^{\tau-\Delta},
\end{align*}
thus proving the Lemma.
\end{proof}

\subsection{Proof of Proposition \ref{prop2}}
\begin{proposition} \label{prop2}
Let 
$$
\omega = \frac{1}{\max_{k\in\{0,\dots,K-1\}} \Vert \boldsymbol{c}_k(k\tau-\Delta)-\boldsymbol{w^*}\Vert^2}
$$
Under Assumption \ref{assum1}, and if the following conditions are satisfied,
\begin{enumerate}
    \item $\eta < \frac{2}{\beta}$
    \item $T\eta\phi-\frac{L\Psi(\alpha)}{\xi^2}>0$
    \item $F(\boldsymbol{c}_k((k+1)\tau-\Delta))-F(\boldsymbol{w^*}) \geq \xi\ $ \textit{for all} $k$
    \item $F(\mathbf w((K+1)\tau-\Delta))-F(\boldsymbol{w^*}) \geq \xi$
\end{enumerate}

\noindent for some $\xi >0$, the convergence upper bound of {\tt FedDelAvg} is
\begin{align} 
    F(\mathbf w((K+1)\tau-\Delta))-F(\boldsymbol{w^*})\leq&\frac{1}{T\eta\phi-\frac{L\Psi(\alpha)}{\xi^2}}.\\
\end{align}
\noindent where $\Psi(\alpha)=\sum\limits_{k=1}^{K}\psi(\alpha,k)$.
\end{proposition}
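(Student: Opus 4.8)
The plan is to follow the reciprocal-gap telescoping argument of \cite{Wang}, adapted to the period resets induced by delay. Throughout, write $\theta_k(t)\triangleq F(\boldsymbol{c}_k(t))-F(\mathbf w^*)$ for the centralized gap inside period $k$ and $y_k\triangleq F(\mathbf w(k\tau-\Delta))-F(\mathbf w^*)$ for the distributed gap at the period boundaries, noting that $\boldsymbol{c}_k(k\tau-\Delta)=\mathbf w(k\tau-\Delta)$ gives $\theta_k(k\tau-\Delta)=y_k$. The target inequality is obtained by tracking how $1/y_k$ grows across the $K$ periods, with $\phi\triangleq\omega(1-\tfrac{\beta\eta}{2})$ the natural choice making $\eta\phi$ the per-step reciprocal increase.

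First I would establish the per-step progress of the auxiliary centralized descent. Since $F$ is convex and $\beta$-smooth, the descent lemma gives $F(\boldsymbol{c}_k(t))\le F(\boldsymbol{c}_k(t-1))-\eta(1-\tfrac{\beta\eta}{2})\Vert\nabla F(\boldsymbol{c}_k(t-1))\Vert^2$, and convexity gives $\theta_k(t-1)\le\Vert\nabla F(\boldsymbol{c}_k(t-1))\Vert\,\Vert\boldsymbol{c}_k(t-1)-\mathbf w^*\Vert$. Using $\eta<2/\beta$, the gradient map $\mathbf w\mapsto\mathbf w-\eta\nabla F(\mathbf w)$ is non-expansive about its fixed point $\mathbf w^*$ (via co-coercivity of $\nabla F$), so $\Vert\boldsymbol{c}_k(t)-\mathbf w^*\Vert\le\Vert\boldsymbol{c}_k(k\tau-\Delta)-\mathbf w^*\Vert$ and hence $\Vert\boldsymbol{c}_k(t-1)-\mathbf w^*\Vert^2\le 1/\omega$ for every $t$ in the period. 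Combining these yields $\tfrac{1}{\theta_k(t)}-\tfrac{1}{\theta_k(t-1)}\ge\eta\phi$. Iterating over the $\tau$ steps of period $k$ gives $\tfrac{1}{\theta_k((k+1)\tau-\Delta)}\ge\tfrac{1}{y_k}+\tau\eta\phi$.

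Next I would stitch consecutive periods together, which is where the delay error enters. Proposition \ref{prop1} bounds $\vert F(\mathbf w((k+1)\tau-\Delta))-F(\boldsymbol{c}_k((k+1)\tau-\Delta))\vert\le L\psi(\alpha,k)$, so $y_{k+1}$ and $\theta_k((k+1)\tau-\Delta)$ differ by at most $L\psi(\alpha,k)$ in function value. Writing the reciprocal difference as $\tfrac{1}{y_{k+1}}-\tfrac{1}{\theta_k((k+1)\tau-\Delta)}=\tfrac{\theta_k((k+1)\tau-\Delta)-y_{k+1}}{y_{k+1}\,\theta_k((k+1)\tau-\Delta)}$, lower-bounding the numerator by $-L\psi(\alpha,k)$ and the denominator by $\xi^2$ (using conditions 3 and 4), I get $\tfrac{1}{y_{k+1}}-\tfrac{1}{\theta_k((k+1)\tau-\Delta)}\ge-\tfrac{L\psi(\alpha,k)}{\xi^2}$. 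Chaining with the within-period bound produces the one-period recursion $\tfrac{1}{y_{k+1}}-\tfrac{1}{y_k}\ge\tau\eta\phi-\tfrac{L\psi(\alpha,k)}{\xi^2}$.

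Finally I would telescope this recursion over the periods, using $K\tau=T$ and $\Psi(\alpha)=\sum_k\psi(\alpha,k)$, to obtain $\tfrac{1}{F(\mathbf w((K+1)\tau-\Delta))-F(\mathbf w^*)}\ge\tfrac{1}{y_0}+T\eta\phi-\tfrac{L\Psi(\alpha)}{\xi^2}>T\eta\phi-\tfrac{L\Psi(\alpha)}{\xi^2}$, where $y_0=F(\mathbf w(-\Delta))-F(\mathbf w^*)>0$; condition 2 guarantees the right-hand side is positive, so inverting gives the claimed bound. The main obstacle is the period-reset term handled in the third paragraph: unlike the delay-free case, $\boldsymbol{c}_k$ is re-anchored at $\mathbf w(k\tau-\Delta)$ each period, so the centralized progress does not telescope cleanly and the $L\psi(\alpha,k)$ mismatch must be converted into a reciprocal-gap loss. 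This conversion is exactly what forces the uniform lower bound $\xi$ in conditions 3--4 and the positivity requirement in condition 2; ensuring these hypotheses are used consistently, and that the non-expansiveness giving $\Vert\boldsymbol{c}_k(t)-\mathbf w^*\Vert^2\le1/\omega$ indeed holds for $\eta<2/\beta$, is the delicate part of the argument.
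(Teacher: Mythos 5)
Your proposal is correct and follows essentially the same route as the paper: the per-step reciprocal-gap increase $\eta\phi$ for the auxiliary centralized descent (which the paper imports from \cite[Lemma 6]{Wang} rather than rederiving), the conversion of the $L\psi(\alpha,k)$ function-value mismatch at each period boundary into a reciprocal-gap loss of at most $L\psi(\alpha,k)/\xi^2$ via Proposition \ref{prop1} and conditions 3--4, and the telescoping over $K$ periods with $K\tau=T$. The only point worth making explicit is that bounding $y_{k+1}\geq\xi$ for $k+1\leq K$ requires the within-period monotonicity of the centralized gap (so that the gap at the start of period $k+1$ dominates the gap at its end, which condition 3 controls), a step the paper spells out and your parenthetical ``using conditions 3 and 4'' implicitly relies on.
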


\begin{proof} 
First, note that, if $\omega=\infty$, i.e.,
 $\boldsymbol{c}_k(k\tau-\Delta)=\boldsymbol{w^*},\forall k$, then
$\mathbf w((K+1)\tau-\Delta))=\boldsymbol{c}_{[K+1]}((K+1)\tau-\Delta)
=\boldsymbol{w^*}$,
 hence $F(\mathbf w((K+1)\tau-\Delta))=F(\boldsymbol{w^*})$.
 Now, let us consider the case $\omega<\infty$.
  For every interval $k$ and $t\in [k\tau-\Delta,(k+1)\tau-\Delta]$, we define
the sub-optimality gap of the centralized GD scheme,
    \begin{align}
        \Gamma_{[k]}(t)=F(\boldsymbol{c}_k(t))-F(\boldsymbol{w^*}).
    \end{align}
    Note that $\Gamma_{[k]}(t)\geq 0,\forall k$.  
    Since $\mathbf w((K+1)\tau-\Delta))=\boldsymbol{c}_{[K+1]}((K+1)\tau-\Delta)$, we want to prove that
    \begin{align}
    \label{goal}
\Gamma_{[K+1]}((K+1)\tau-\Delta))^{-1}\geq T\eta\phi-\frac{L\Psi(\alpha)}{\xi^2},
    \end{align}
(trivially satisfied if $\Gamma_{[K+1]}((K+1)\tau-\Delta))=0$).
To determine this bound, note that
\cite[Lemma 6]{Wang}
    \begin{align}
    \label{53}
&\Gamma_{[k]}(t+1)^{-1}-\Gamma_{[k]}(t)^{-1}\geq
                \frac{\eta\left(1-\frac{\beta\eta}{2}\right)}{\Vert\boldsymbol{c}_k(t)-\boldsymbol{w^*}\Vert^2}
                \\&
                \geq
                \frac{\eta\left(1-\frac{\beta\eta}{2}\right)}{\max_k\Vert\boldsymbol{c}_k(t)-\boldsymbol{w^*}\Vert^2}
                =
 \eta\omega\Big(1-\frac{\beta\eta}{2}\Big)
 =\eta\phi,
    \end{align}
    and therefore
    \begin{align}
    \label{cgfh}
&
\Gamma_{[k]}((k+1)\tau-\Delta)^{-1}-\Gamma_{[k]}(k\tau-\Delta)^{-1}
\\&
=
\sum_{t=k\tau-\Delta}^{(k+1)\tau-\Delta-1}
\left[\Gamma_{[k]}(t+1)^{-1}-\Gamma_{[k]}(t)^{-1}\right]
\\&
\geq \tau\eta\phi.
    \end{align}
    It follows that
    \begin{align}
&
\sum_{k=1}^{K}
\left[\Gamma_{[k]}((k+1)\tau-\Delta)^{-1}-\Gamma_{[k]}(k\tau-\Delta)^{-1}\right]
\\&
=
\Gamma_{[K+1]}((K+1)\tau-\Delta))^{-1}-\Gamma_{[1]}(\tau-\Delta)^{-1}
\\&
-\sum_{k=1}^{K}\left[
\Gamma_{[k+1]}((k+1)\tau-\Delta)^{-1}
-\Gamma_{[k]}((k+1)\tau-\Delta)^{-1}
\right]
\\&
\geq T\eta\phi,
    \end{align}
    where $K\tau=T$.
    Therefore, to prove \eqref{goal}, it is sufficient to show that
    \begin{align}
    \label{goal2}    
\sum_{k=1}^{K}\left[
\Gamma_{[k]}((k+1)\tau-\Delta)^{-1}-\Gamma_{[k+1]}((k+1)\tau-\Delta)^{-1}
\right]
    \leq \frac{L\Psi(\alpha)}{\xi^2},
    \end{align}
    which we now prove. Note that, since $\Psi(\alpha)=\sum\limits_{k=1}^{K}\psi(\alpha,k)$, a sufficient condition which implies \eqref{goal2} is
    \begin{align}
    \label{goal4}
&\Gamma_{[k+1]}((k+1)\tau-\Delta)-\Gamma_{[k]}((k+1)\tau-\Delta)
\\&
    \leq \frac{L\psi(\alpha,k)}{\xi^2}\Gamma_{[k]}((k+1)\tau-\Delta)\Gamma_{[k+1]}((k+1)\tau-\Delta).
    \end{align}    
    Note that, from conditions (3) and (4) of the proposition statement,
    \begin{align}
    \label{c34}
    &\Gamma_{[k]}((k+1)\tau-\Delta))\geq \xi,\ \forall k,\\
    &\Gamma_{[K+1]}((K+1)\tau-\Delta)\geq \xi.
    \end{align}  
Moreover, from \eqref{cgfh} with $k<K-1$,
\begin{align}
&\Gamma_{[k+1]}((k+1)\tau-\Delta)
\geq \frac{
\Gamma_{[k+1]}((k+2)\tau-\Delta)
}{1-\tau\eta\phi\Gamma_{[k+1]}((k+2)\tau-\Delta)}
\\&
\geq \Gamma_{[k+1]}((k+2)\tau-\Delta)
\geq \xi.
\end{align}
Therefore, to prove \eqref{goal4}, it is sufficient to show
\begin{align}
    \label{goal5}
\Gamma_{[k+1]}((k+1)\tau-\Delta)-\Gamma_{[k]}((k+1)\tau-\Delta)
    \leq L\psi(\alpha,k).
    \end{align}   
    Indeed,
\begin{align}
    \label{goal5}
&\Gamma_{[k+1]}((k+1)\tau-\Delta)-\Gamma_{[k]}((k+1)\tau-\Delta)
\\&
=
F(\mathbf w((k+1)\tau-\Delta))-F(\boldsymbol{c}_k((k+1)\tau-\Delta))
\\&
    \leq 
    L\Vert\mathbf w((k+1)\tau-\Delta)-\boldsymbol{c}_k((k+1)\tau-\Delta)\Vert,
    \end{align}   
    so that the result directly follows from Proposition \ref{prop2}.
    The Proposition is thus proved.
\end{proof}

\subsection{Proof of Theorem \ref{thm1}}
\begin{theorem} \label{thm1}
If $F_i(\cdot)$ is convex, $L$-Lipschitz and
$\beta$-smooth, when $\eta < \frac{2}{\beta}$, 
\begin{align} \label{78}
        &F(\boldsymbol{w^K})-F(\boldsymbol{w^*})\\
        &\leq \frac{1}{2\eta\phi T}+\sqrt{\frac{1}{4\eta^2\phi^2 T^2}+ \frac{L\Psi(\alpha)}{\eta\phi T}}+L\psi(\alpha,K).
\end{align}
where $\Psi(\alpha)=\sum\limits_{k=1}^{K}\psi(\alpha,k)$.\\
\end{theorem}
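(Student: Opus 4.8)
The plan is to derive Theorem~\ref{thm1} from Proposition~\ref{prop2} by a single optimization over the free threshold $\xi$, combined with Proposition~\ref{prop1} to pass from the algorithm's output to the auxiliary centralized trajectory $\boldsymbol{c}_k(t)$.

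First I would record the two structural facts that link the output $\mathbf{w}^K$ to the centralized scheme. Since $\mathbf{w}^K=\arg\min_{\mathbf{w}\in\mathcal W}F(\mathbf{w})$, its optimality gap is no larger than that of the terminal global iterate, and by Proposition~\ref{prop1} this terminal global iterate differs from the corresponding centralized iterate in objective value by at most $L\psi(\alpha,K)$. Hence $F(\mathbf{w}^K)-F(\mathbf{w}^*)\le\big[F(\boldsymbol{c}_K(\cdot))-F(\mathbf{w}^*)\big]+L\psi(\alpha,K)$, which already isolates the additive $L\psi(\alpha,K)$ term appearing in \eqref{78}. It then remains only to bound the bracketed centralized gap.

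Next I would invoke Proposition~\ref{prop2}, which controls exactly this centralized gap by $\frac{1}{T\eta\phi-L\Psi(\alpha)/\xi^2}$ whenever conditions (2)--(4) hold. The decisive step is the \emph{self-referential} choice of setting $\xi$ equal to the very gap being bounded. Writing $y$ for the centralized gap, the Proposition~\ref{prop2} inequality $y\le\frac{1}{T\eta\phi-L\Psi(\alpha)/y^2}$ rearranges (using that the denominator is positive) into the quadratic inequality $\eta\phi T\,y^2-y-L\Psi(\alpha)\le 0$, whose positive root is precisely
\begin{align*}
\frac{1}{2\eta\phi T}+\sqrt{\frac{1}{4\eta^2\phi^2 T^2}+\frac{L\Psi(\alpha)}{\eta\phi T}}.
\end{align*}
A useful consistency check is that, at $\xi$ equal to this root $x$, one computes $T\eta\phi-L\Psi(\alpha)/x^2=1/x>0$, so condition (2) of Proposition~\ref{prop2} is automatically satisfied and its bound collapses to the clean statement $y\le x$. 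Adding the $L\psi(\alpha,K)$ term from the preceding paragraph then reproduces \eqref{78}.

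The hard part will be handling the cases in which the threshold conditions (3)--(4) of Proposition~\ref{prop2} \emph{fail} for the chosen $\xi$. Here I would argue by cases: if every period-end centralized gap $\Gamma_{[k]}((k+1)\tau-\Delta)$ stays above $\xi=x$, then Proposition~\ref{prop2} applies and the bound above holds; if instead some centralized gap drops below $x$, then by Proposition~\ref{prop1} the corresponding global iterate (a candidate in $\mathcal W$) already has optimality gap below $x+L\psi(\alpha,K)$, so the $\arg\min$ output $\mathbf{w}^K$ inherits the same bound directly. Either way the stated inequality follows. The delicate points are ensuring the self-referential substitution is legitimate (that the positive root is the correct branch and the denominator stays positive), and reconciling the period indexing so that the two cases jointly exhaust all possibilities.
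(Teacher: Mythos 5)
Your proposal is correct and follows essentially the same route as the paper's own proof: the self-referential fixed-point choice of $\xi$ yielding the quadratic root, Proposition~\ref{prop2} for the case where the threshold conditions hold, and Proposition~\ref{prop1} together with the $\arg\min$ definition of $\mathbf w^K$ to absorb the case where some condition (3)--(4) fails into the additive $L\psi(\alpha,K)$ term. The paper phrases the case split as a contradiction argument over all $\xi>\xi^*$ rather than a direct dichotomy at $\xi=\xi^*$, but this is only a cosmetic difference.
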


\begin{proof} 
To derive (35), consider $\eta \leq \frac{1}{\beta}$ and let $\xi^* >0$ be defined such that $T\eta\phi-\frac{L\Psi(\alpha)}{\xi^{*2}}>0$ and
\begin{align}
    \xi^* = \frac{1}{T\eta\phi-\frac{L\Psi(\alpha)}{\xi^{*2}}}.
\end{align}
Solving, we obtain
\begin{align}
    \xi^* = \frac{1}{2\eta\phi T} 
    +\sqrt{\frac{1}{4\eta^2\phi^2 T^2} + \frac{L\Psi(\alpha)}{\eta\phi T}}
\end{align}
(which indeed satisfies $T\eta\phi-\frac{L\Psi(\alpha)}{\xi^{*2}}>0$).
Now, let $\xi>\xi^*$, and assume that, under such $\xi$, the conditions of Proposition \ref{prop2} are all satisfied. Then, it follows that
    \begin{align}
        F(\mathbf w((K+1)\tau-\Delta))-F(\boldsymbol{w^*})<
        \frac{1}{T\eta\phi-\frac{L\Psi(\alpha)}{\xi^2}}
        \leq \frac{1}{T\eta\phi-\frac{L\Psi(\alpha)}{\xi^{*2}}}=\xi^*< \xi.
    \end{align}
    In other words, this shows a contradiction with condition (4) of Proposition \ref{prop2}.
    Therefore, at least
    one of the conditions of Proposition cannot be satisfied, for any $\xi>\xi^*$.
    Conditions (1) and (2) are clearly satisfied since $\eta\leq 1/\beta$   and
    $$\xi>\xi^*=\frac{1}{T\eta\phi-\frac{L\Psi(\alpha)}{\xi^{*2}}}>0.$$
    Therefore, either conditions (3) or (4) are violated, implying that
\begin{align}
\label{contr}
\min \{ F(\mathbf w((K+1)\tau-\Delta))
    ,
    \min_kF(\boldsymbol{c}_k((k+1)\tau-\Delta))\}-F(\boldsymbol{w^*})\leq\xi^*.
\end{align}
    Using Proposition \ref{prop1}, we have that
\begin{align}
&    F(\mathbf w((k+1)\tau-\Delta))
    \leq
    F(\boldsymbol{c}_k((k+1)\tau-\Delta))
    \\&
+ |F(\mathbf w((k+1)\tau-\Delta))-F(\boldsymbol{c}_k((k+1)\tau-\Delta))|
\\&
\leq
    F(\boldsymbol{c}_k((k+1)\tau-\Delta))
    \\&
+ L\Vert \mathbf w((k+1)\tau-\Delta)-\boldsymbol{c}_k((k+1)\tau-\Delta)\Vert
\\&
\leq
F(\boldsymbol{c}_k((k+1)\tau-\Delta))+L \psi(\alpha,k)
\\&
\leq
F(\boldsymbol{c}_k((k+1)\tau-\Delta))+L \psi(\alpha,K),
\end{align}
    ($\psi(\alpha,k)$ is increasing in $k$)
    so that
    $$
    \min_kF(\boldsymbol{c}_k((k+1)\tau-\Delta))
    \geq\min_k\{F(\mathbf w((k+1)\tau-\Delta))-L \psi(\alpha,K),\}    
    $$
    and \eqref{contr} implies
    \begin{align}
    \min_{k\leq K}\{F(\mathbf w((k+1)\tau-\Delta))\}-L \psi(\alpha,K)    
-F(\boldsymbol{w^*})\leq\xi^*.
\end{align}
The result of the theorem then directly follows.

\end{proof}

\end{document}